
\documentclass[10pt,twocolumn,letterpaper]{article}

\usepackage[pagenumbers]{cvpr} 

\usepackage{graphicx}
\usepackage{amsmath}
\usepackage{amssymb}
\usepackage{booktabs}

%
\usepackage[pagebackref,breaklinks,colorlinks]{hyperref}

\usepackage[capitalize]{cleveref}
\crefname{section}{Sec.}{Secs.}
\Crefname{section}{Section}{Sections}
\Crefname{table}{Table}{Tables}
\crefname{table}{Tab.}{Tabs.}


\usepackage{tabularx}
\usepackage{floatrow}
\usepackage{multirow}
\usepackage{wrapfig}
\usepackage{amsthm}


\newcommand{\filluptopage}[1]{%
  \clearpage
  \loop\ifnum\value{page}<#1\relax
    \null\clearpage
  \repeat
  \loop\ifnum\value{page}=#1\relax
    \null\clearpage
  \repeat
}

\makeatletter
\def\blfootnote{\xdef\@thefnmark{}\@footnotetext}
\makeatother

\newcommand{\x}{\mathbf{x}}
\newcommand{\g}{\mathbf{g}}

\newcommand{\bphi}{\boldsymbol{\phi}}

\newcommand{\G}{\mathbf{G}}
\newcommand{\Gx}{\mathbf{G}_{\mathbf{x}}}
\newcommand{\Id}{\mathbf{I}}

\newcommand{\zero}{\mathbf{0}}

\newcommand{\Rot}{\mathbf{R}}

\newcommand{\q}{\mathbf{q}}

\newcommand{\bb}{\mathbf{b}} 
\newcommand{\M}{\mathbf{M}}

\renewcommand{\u}{\mathbf{u}}
\renewcommand{\v}{\mathbf{v}}

\newcommand{\U}{\mathbf{U}}

\newcommand{\V}{\mathbf{V}}

\newcommand{\y}{\mathbf{y}}

\newcommand{\Sphere}{\mathcal{S}}
\newcommand{\R}{\mathbb{R}}

\newcommand{\Amb}{\mathcal{X}}
\newcommand{\Man}{\mathcal{M}}

\newcommand{\curve}{\gamma}
\newcommand{\dcurve}{\dot{\gamma}}
\newcommand{\geo}{\curve_\v}
\newcommand{\dgeo}{\dcurve_\v}
\newcommand{\len}{\ell}
\newcommand{\Exp}{\mathrm{Exp}}
\newcommand{\Log}{\mathrm{Log}}
\newcommand{\Loss}{\mathcal{L}}
\newcommand{\Rx}{R_{\x}}
\newcommand{\TiM}{\T_{\Id}\Man}
\newcommand{\TxM}{\T_{\x}\Man}
\newcommand{\T}{\mathcal{T}}
\newcommand{\TM}{\mathcal{T}\mathcal{M}}
\newcommand{\grad}[1]{\mathrm{grad}#1}
\newcommand{\SO}{\mathrm{SO}(3)}

\newcommand{\lie}{\boldsymbol{\omega}}
\newcommand{\gt}{\mathrm{gt}}

\newtheorem{lemma}{Lemma}

\newtheorem{dfn}{Definition}

\begin{document}
\title{Projective Manifold Gradient Layer for Deep Rotation Regression}

\author{
Jiayi Chen\textsuperscript{1,2} \quad 
Yingda Yin\textsuperscript{1} \quad 
Tolga Birdal\textsuperscript{3,4} \quad 
Baoquan Chen\textsuperscript{1} \quad 
Leonidas J.~Guibas\textsuperscript{3} \quad 
He Wang\textsuperscript{1$^\dagger$} \\
\textsuperscript{1}CFCS, Peking University \quad 
\textsuperscript{2}Beijing Institute for General AI \\
\textsuperscript{3}Stanford University \quad
\textsuperscript{4}Imperial College London
}

\maketitle

\begin{abstract}
Regressing rotations on SO(3) manifold using deep neural networks is an important yet unsolved problem. 
The gap between the Euclidean network output space and the non-Euclidean SO(3) manifold imposes a severe challenge for neural network learning in both forward and backward passes.
While several works have proposed different regression-friendly rotation representations, very few works have been devoted to improving the gradient backpropagating in the backward pass. 
In this paper, we propose a manifold-aware gradient that directly backpropagates into deep network weights. 
Leveraging Riemannian optimization to construct a novel projective gradient, our proposed regularized projective manifold gradient (RPMG) method helps networks achieve new state-of-the-art performance in a variety of rotation estimation tasks.
Our proposed gradient layer can also be applied to other smooth manifolds such as the unit sphere. Our project page is at  \href{https://jychen18.github.io/RPMG}{https://jychen18.github.io/RPMG}.

\end{abstract}
\let\thefootnote\relax\footnote{$^\dagger$: He Wang is the corresponding author (hewang@pku.edu.cn).}
\vspace{-2mm}
\section{Introduction}\label{sec:intro}
\vspace{-2mm}
Estimating rotations is a crucial problem in visual perception that has broad applications, \textit{e.g.}, in object pose estimation, robot control, camera relocalization, 3D reconstruction and visual odometry~\cite{kendall2015posenet,bui20206d,wang2019densefusion,gojcic2020learning,dong2020robust}. Recently, with the proliferation of deep neural networks, learning to accurately regress rotations is attracting more and more attention. However, the non-Euclidean characteristics of rotation space make accurately regressing rotation very challenging.

As we know, rotations reside in a non-Euclidean manifold, $\SO$ group, whereas the unconstrained outputs of neural networks usually live in Euclidean spaces.
This gap between the neural network output space and $\SO$ manifold becomes a major challenge for deep rotation regression, thus tackling this gap becomes an important research topic.
One popular research direction is to design learning-friendly rotation representations, \textit{e.g.}, 6D continuous representation from \cite{zhou2019continuity} and 10D symmetric matrix representation from \cite{peretroukhin_so3_2020}. 
Recently, Levinson \textit{et al.}\cite{levinson2020analysis} adopted the vanilla 9D matrix representation discovering that simply replacing the Gram-Schmidt process in the 6D representation~\cite{zhou2019continuity} with symmetric SVD-based orthogonalization can make this representation superior to the others.

Despite the progress on discovering better rotation representations, the gap between a Euclidean network output space and the non-Euclidean $\SO$ manifold hasn't been completely filled. 
One important yet long-neglected problem lies in optimization on non-Euclidean manifolds\cite{absil2009optimization}:
to optimize on $\SO$ manifold, the optimization variable is a rotation matrix, which contains nine matrix elements; if we naively use \textit{Euclidean gradient}, which simply computes the partial derivatives with respect to each of the nine matrix elements, to update the variable, this optimization step will usually lead to a new matrix off $\SO$ manifold. 
Unfortunately, we observe that all the existing works on rotation regression simply rely upon 
\textit{vanilla auto-differentiation} for backpropagation, exactly computing Euclidean gradient and performing such off-manifold updates to predicted rotations.
We argue that, for training deep rotation regression networks, the off-manifold components will lead to noise in the gradient of neural network weights, hindering network training and convergence.

To tackle this issue, we draw inspiration from differential geometry, where people leverage \emph{Riemannian optimization} to optimize on the non-Euclidean manifold, which finds the direction of the steepest geodesic path on the manifold and take an on-manifold step.
We thus propose to leverage Riemannian optimization and delve deep into the study of the backward pass.
Note that this is a fundamental yet currently under-explored avenue, given that most of the existing works focus on a holistic design of rotation regression that is agnostic to forward/backward pass.
However, incorporating Riemannian optimization into network training is highly non-trivial and challenging. Although methods of Riemannian optimization allow for optimization on $\SO$~\cite{taylor1994minimization, blanco2010tutorial}, matrix manifolds~\cite{absil2009optimization} or general Riemannian manifolds \cite{zhang2016riemannian, udriste2013convex}, they are not directly applicable to update the weights of the neural networks that are Euclidean. Also, approaches like~\cite{hou2018computing} incorporate a Riemannian distance as well as its gradient into network training, however, they do not deal with the \emph{representation} issue.

In this work, we want to \textit{propose a better manifold-aware gradient in the backward pass of rotation regression that directly updates the neural network weights}.
We begin by taking a Riemannian optimization step and computing the difference between the rotation prediction and the updated rotation, which is closer to the ground truth.
Backpropagating this "error", we encounter the mapping function (or orthogonalization function) that transforms the raw network output to a valid rotation. 
This projection, which can be the Gram-Schmidt process or SVD orthogonalization\cite{levinson2020analysis}, is typically a many-to-one mapping.
This non-bijectivity provides us with a new design space for our gradient: if we were to use a gradient to update the raw output rotation, many gradients would result in the same update in the final output rotation despite being completely different for backpropagating into the neural network weights. 
Now the problem becomes: \textit{which gradient is the best for backpropagation when many of them correspond to the same update to the output?}

We observe that this problem is somewhat similar to some problems with ambiguities or multi-ground-truth issues. One example would be the symmetry issue in pose estimation: a symmetric object, \textit{e.g.} a textureless cube, appears the same under many different poses, which needs to be considered when supervising the pose predictions.
For supervising the learning in such a problem, 
Wang \textit{et. al.}\cite{wang2019normalized} proposed to use min-of-N loss\cite{fan2017point}, which only penalizes the smallest error between the prediction and all the possible ground truths.
We therefore propose to find the gradient with the smallest norm that can update the final output rotation to the goal rotation.
This \emph{back-projection} process involves finding an element closest to the network output in the inverse image of the goal rotation and projecting the network output to this inverse image space. We therefore coin our gradient \textit{projective manifold gradient}. 
One thing to note is that this projective gradient tends to shorten the network output, causing the norms of network output to vanish. To fix this problem, we further incorporate a simple regularization into the gradient, leading to our full solution \textit{regularized projective manifold gradient} (RPMG).

Note that our proposed gradient layer operates on the raw network output and can be directly backpropagated into the network weights. Our method is very general and is not tied to a specific rotation representation. It can be coupled with different non-Euclidean rotation representations, including quaternion, 6D representation  \cite{zhou2019continuity}, and 9D rotation matrix representation \cite{levinson2020analysis}, and can even be used for regressing other non-manifold variables.

We evaluate our devised projective manifold gradient layers on a diverse set of problems involving rotation regression: 3D object pose estimation from 3D point clouds/images, rotation estimation problems without using ground truth rotation supervisions, and please see Appendix \ref{sec:reloc} for more experiments on camera relocalization. Our method demonstrates significant and consistent improvements on all these tasks and all different rotation representations tested. 
Going beyond rotation estimation, we also demonstrate performance improvements on regressing unit vectors (lie on a unit sphere) as an example of an extension to other non-Euclidean manifolds.

We summarize our contribution as below: 
\begin{itemize}
    \setlength\itemsep{-1mm}
    \item We propose a novel manifold-aware gradient layer, namely \textit{RPMG}, for the backward pass of rotation regression, which can be applied to different rotation representations and losses and used as a ``plug-in'' at no actual cost. 
    
    \item Our extensive experiments over different tasks and rotation representations demonstrate the significant improvements from using RPMG.
    
    \item Our method can also benefit regression tasks on other manifolds, \textit{e.g.} $\Sphere^2$.
\end{itemize}

\vspace{-1mm}
\section{Related Work}
\vspace{-2mm}
\label{sec:related}

Both rotation parameterization and optimization on SO(3) are well-studied topics.
Early deep learning models leverage various rotation representations for pose estimation, \textit{e.g.}, direction cosine matrix (DCM)~\cite{huang2021multibodysync,yi2019deep}, axis-angle \cite{DeMoN17, deep-6dpose18, occlusion18}, quaternion \cite{xiang2017posecnn, geometric_loss17, posenet,zhao2020quaternion,deng2020deep} and Euler-angle \cite{viewpoints_and_keypoints15, renderforcnn15, 3d-rcnn18}. 
Recently, \cite{zhou2019continuity} points out that Euler-angle, axis-angle, and quaternion are not continuous rotation representations, since their representation spaces are not homeomorphic to SO(3). As better representations for rotation regression, 6D~\cite{zhou2019continuity}, 9D~\cite{levinson2020analysis}, 10D~\cite{peretroukhin_so3_2020} representations are proposed to resolve the discontinuity issue and improve the regression accuracy. A concurrent work \cite{regression_manifold21} examines different manifold mappings theoretically and experimentally, finding out that SVD orthogonalization performs the best when regressing arbitrary rotations.
Originating from general Riemannian optimization, \cite{taylor1994minimization} presents an easy approach for minimization on the $SO(3)$ group by constructing a local axis-angle parameterization, which is also the tangent space of $SO(3)$ manifold. They backpropagate gradient to the tangent space and use the exponential map to update the current rotation matrix.
Most recently, \cite{teed2021tangent} constructs a PyTorch library that supports tangent space gradient backpropagation for 3D transformation groups, (\textit{e.g.}, $SO(3)$, $SE(3)$, $Sim(3)$). This proposed library can be used to implement the Riemannian gradient in our layer.
\vspace{-1mm}
\section{Preliminaries}
\label{sec:prelim}
\vspace{-1mm}
\subsection{Riemannian Geometry}
\vspace{-1mm}

Following~\cite{birdal2018bayesian,birdal2019probabilistic}, we define an $m$-dimensional \textit{Riemannian manifold} embedded in an ambient Euclidean space $\Amb = \R^d$ and endowed with a \textit{Riemannian metric} $\G\triangleq (\Gx)_{\x\in\Man}$ to be a smooth curved space $(\Man,G)$. A vector $\v\in\Amb$ is said to be \emph{tangent} to $\Man$ at $\x$ iff there exists a smooth curve $\curve:[0,1]\mapsto\Man$ s.t. $\curve(0)=\x$ and $\dcurve(0)=\v$. The velocities of all such curves through $\x$ form the \emph{tangent space} $\TxM=\{ \dcurve (0) \,|\, \curve:\R\mapsto\Man \text{ is smooth around $0$ and } \curve(0)=\x\}$.

\vspace{-1mm}
\begin{dfn}[Riemannian gradient]
For a smooth function $f:\Man\mapsto\R$ and $\forall (\x,\v)\in \TM$, we define the \emph{Riemannian gradient} of $f$ as the unique vector field $\grad{f}$ satisfying~\cite{boumal2020introduction}:
\vspace{-2mm}
\begin{equation}
    \mathrm{D}f(\x)[\v] = \langle \v, \grad{f(\x)} \rangle_\x
\vspace{-1mm}
\end{equation}
where $\mathrm{D}f(\x)[\v]$ is the derivation of $f$ by $\v$. It can further be shown (see Appendix \ref{sec:supp2.1}) that an expression for $\grad{f}$ can be obtained through the projection of the \emph{Euclidean} gradient orthogonally onto the tangent space
\vspace{-2mm}
\begin{equation}
\grad{f(\x)} = \nabla f(\x)_{\|} = \Pi_{\x}\big( \nabla f(\x)\big).    
\vspace{-1mm}
\end{equation}
where $\Pi_{\x}:\Amb\mapsto\TxM\subseteq \Amb$ is an orthogonal projector with respect to $\langle \cdot,\cdot \rangle_{\x}$. 
\end{dfn}

\vspace{-2mm}
\begin{dfn}[Riemannian optimization]
We consider gradient descent to solve the problems of $\min_{\x\in\Man}f(\x)$. For a local minimizer or a \emph{stationary point} $\x^\star$ of $f$, the Riemannian gradient vanishes $\grad{f(\x^\star)}=0$ enabling a simple algorithm, \emph{Riemannian gradient descent} (RGD):
\vspace{-2mm}
\begin{align}
\x_{k+1} = R_{\x_k}(-\tau_k\,\grad{f(\x_k)})
\end{align}
where $\tau_k$ is the step size at iteration $k$ and $R_{\x_k}$ is the \emph{retraction} usually chosen related to the exponential map. 
\end{dfn}


\vspace{-2mm}
\subsection{Rotation Representations}
\vspace{-1mm}
\label{sec:rot_rep}
There are many ways of representing a rotation: classic rotation representations, \eg Euler angles, axis-angle, and quaternion; and recently introduced regression-friendly rotation representations such as \eg 5D~\cite{zhou2019continuity}, 6D~\cite{zhou2019continuity}, 9D~\cite{levinson2020analysis} and 10D~\cite{peretroukhin_so3_2020} representations. A majority of deep neural networks can output an \emph{unconstrained}, arbitrary $n$-dimensional vector $\x$ in a Euclidean space $\mathcal{X}=\R^n$. For Euler angle and axis-angle representations which use a vector from $\R^3$ to represent a rotation, a neural network can simply output a 3D vector; however, for quaternions, 6D, 9D or 10D representations that lies on non-Euclidean manifolds, manifold mapping function $\pi:\R^n\mapsto \Man$ is generally needed for normalization or orthogonalization purposes to convert network outputs to valid elements belonging to the representation manifold. This network Euclidean output space $\Amb$ is where the representation manifolds reside and therefore are also called ambient space.

\begin{dfn}[Rotation representation]
One rotation representation, which lies on a representation manifold $\Man$, defines a surjective rotation mapping $\phi: \hat{\x} \in \Man \rightarrow \phi(\hat{\x}) \in \SO$ and a representation mapping function $\psi : \Rot  \in \SO  \rightarrow \psi(\Rot) \in \Man$, such that $\phi(\psi) = \Rot \in \SO$.
\end{dfn}

\begin{dfn}[Manifold mapping function]
From an ambient space $\mathcal{X}$ to the representation manifold $\Man$, we can define a manifold mapping function $\pi: \x \in \mathcal{X} \rightarrow \pi(\x) \in \Man$, which projects a point $\x$ in the ambient, Euclidean space to a valid element $\hat{\x} = \pi(\x)$ on the manifold $\Man$.
\end{dfn}

We summarize the manifold mappings, the rotation mappings and representation mappings for several non-Euclidean rotation representations below.

\noindent\textbf{Unit quaternion.} Unit quaternions represent a rotation using a 4D unit vector $\q \in \Sphere^{3}$ double covering the non-Euclidean $3$-sphere \ie $\q$ and $-\q$ identify the same rotation.
A network with a final linear activation can only predict $\x \in \R^{4}$. The corresponding manifold mapping function is usually chosen to be a normalization step, which reads $\pi_{q}(\x) = \x/\|\x\|$. For rotation and representation mapping, we leverage the standard mappings between rotation and quaternion (see Appendix \ref{sec:q_r_trans}).

\noindent\textbf{6D representation and Gram-Schmidt orthogonalization.} 6D rotation representation\cite{zhou2019continuity}, lying on  Stiefel manifold $\mathcal{V}_2(\mathbb{R}^{3})$, uses two orthogonal unit 3D vectors $(\hat{\mathbf{c}}_1, \hat{\mathbf{c}}_2)$ to represent a rotation, which are essentially the first two columns of a rotation matrix. 
Its manifold mapping $\pi_{6D}$ is done through Gram-Schmidt orthogonalization. Its rotation mapping $\phi_{6D}$ is done by adding the third column $\hat{\mathbf{c}}_3 = \hat{\mathbf{c}}_1 \times \hat{\mathbf{c}}_2$. Its representation mapping  $\psi_{6D}$ is simply getting rid of the third column $\hat{\mathbf{c}}_3$ from a rotation matrix.

\noindent\textbf{9D representation and SVD orthogonalization.} 
To map a raw 9D network output $\M$ to a rotation matrix, \cite{levinson2020analysis} 
use SVD orthogonalization as the manifold mapping function $\pi_{9D}$, as follows: $\pi_{9D}$ first decomposes $\M$ into its left and right singular vectors $\{\U,\V^\top\}$ and singular values $\Sigma$, $\M=\U\Sigma \V^\top$; then it replaces $\Sigma$ with $\Sigma'=\mathrm{diag}(1,1,\mathrm{det}(\U\V^\top))$ and finally, computes $\Rot=\U\Sigma'\V^\top$ to get the corresponding rotation matrix $\Rot \in$ $\SO$. As this representation manifold is $\SO$, both the rotation and representation mapping functions are simply identity. 

\begin{figure*}
  \centering
    \includegraphics[width=0.95\textwidth]{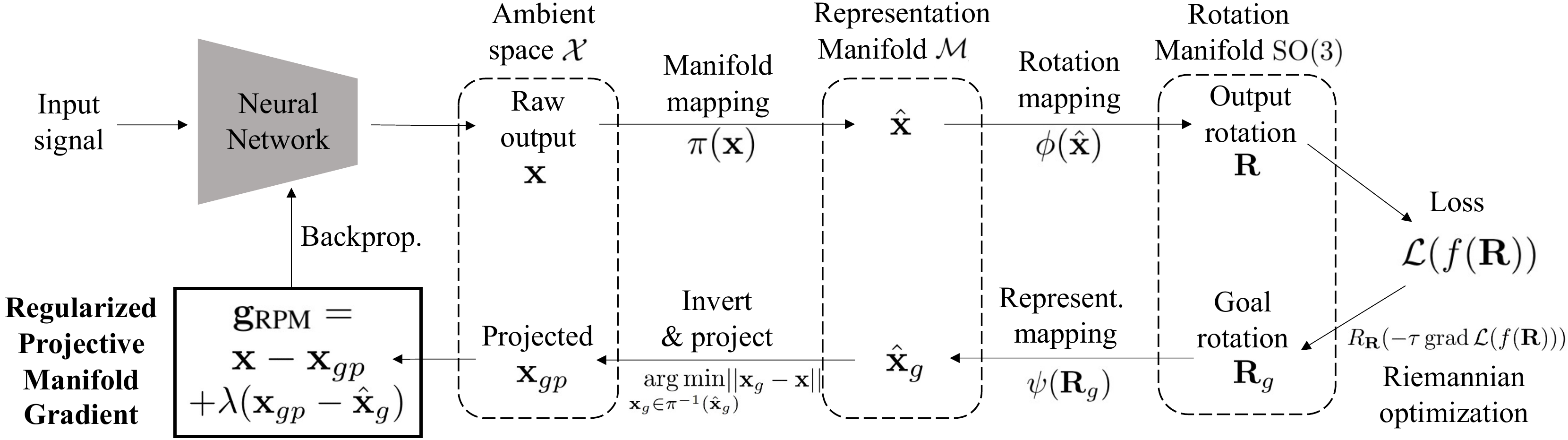}  
    \caption{\textbf{Projective Manifold Gradient Layer.} In the forward pass, the network predicts a raw output $\mathbf{x}$, which is then transformed into a valid rotation $\Rot = \phi(\pi(\x))$. We leave this forward pass unchanged and only modify the backward pass. In the backward pass, we first use Riemannian optimization to get a goal rotation $\mathbf{R}_g$ and map it back to $\hat{\x}_g$ on the representation manifold $\Man$. After that we find the element $\mathbf{x}_{gp}$ which is closest to the raw output in the inverse image of $\hat{\x}_g$, and finally get the gradient $\mathbf{g_{RPM}}$ we want.\vspace{-4mm}}
    \label{fig:pip}
\end{figure*}

\noindent\textbf{10D representation.}
~\cite{peretroukhin_so3_2020} propose a novel 10D representation for rotation matrix. The manifold mapping function $\pi_{10D}$ maps $\boldsymbol{\theta}\in\R^{10}$ to $\q\in\Sphere^3$ by computing the eigenvector corresponding to the smallest eigenvalue of $\mathbf{A}(\boldsymbol{\theta})$, expressed as $\pi_{10D}(\x)=\underset{\q\in\Sphere^3}{\min}~\q^\top\mathbf{A}(\x)\q$, in which 
\begin{equation}
\vspace{1mm}
    \mathbf{A}(\boldsymbol{\theta})~=~\left[ 
        \begin{array}{cccc}
            \theta_1 & \theta_2 & \theta_3 & \theta_4 \\
            \theta_2 & \theta_5 & \theta_6 & \theta_7 \\
            \theta_3 & \theta_6 & \theta_8 & \theta_9 \\
            \theta_4 & \theta_7 & \theta_9 & \theta_{10} \\
        \end{array}
    \right].
    \vspace{1mm}
\end{equation}
Since the representation manifold is $\Sphere^3$, the rotation and representation mapping are the same as unit quaternion.

\subsection{Deep Rotation Regression}
\vspace{-1mm}
We conclude this section by describing the ordinary forward and backward passes of a neural network based rotation regression, as used in \cite{zhou2019continuity, levinson2020analysis}. \vspace{1mm} \\
\textbf{Forward and backward passes.} Assume, for a rotation representation, the network predicts $\x \in \mathcal{X}$, then the manifold mapping $\pi$ will map $\x$ to $\hat{\x} = \pi(\x) \in \Man$, followed by a rotation mapping $\phi$ that finally yields the output rotation $\Rot = \phi(\hat{\x}) = \phi(\pi(\x))$. Our work only tackles the backward pass and keeps the forward pass unchanged, as shown in the top part of Figure \ref{fig:pip}. The gradient in the backward-pass is simply computed using Pytorch autograd method, that is $\mathbf{g}=f'(\Rot)\phi'(\hat{\x})\pi'(\x)$. \vspace{1mm}\\
\textbf{Loss function.}  The most common choice for supervising rotation matrix is L2 loss, $\|\Rot-\Rot_{gt}\|^{2}_{F}$ , as used by \cite{zhou2019continuity, levinson2020analysis}. This loss is equal to $4-4\cos(<\Rot, \Rot_{gt}>)$, where $<\Rot, \Rot_{gt}>$ represents the angle between $\Rot$ and $\Rot_{gt}$. 

\section{Method}
\label{sec:method}
\vspace{-1mm}

\noindent\textbf{Overview.}
In this work, we propose a \emph{projective manifold gradient layer}, without changing the forward pass of a given rotation regressing network, as shown in Figure \ref{fig:pip}. Our focus is to find a better gradient $\g$ of the loss function $\Loss$ with respect to the network raw output $\x$ for backpropagation into the network weights. 

Let's start with examining the gradient of network output $\x$ in a general case -- regression in Euclidean space. Given a ground truth $\x_{gt}$ and the L2 loss $\|\x - \x_{gt}\|^2$ that maximizes the likelihood in the presence of Gaussian noise in $\x$, the gradient would be $\g = 2(\x - \x_{gt})$.

In the case of rotation regression, we therefore propose to find a proper $\x^*\in \mathcal{X}$ for a given ground truth $\Rot_{gt}$ or a computed goal rotation $\Rot_{g}$ when the ground truth rotation is not available, and then simply use $\x - \x^*$ as our gradient to backpropagate into the network.

Note that finding such an $\x^*$ can be challenging. 
Assuming we know $\Rot_{gt}$,  finding an $\x^*$ involves inverting $\phi$ and $\pi$ since the network output $\Rot = \phi(\pi(\x))$.
Furthermore, we may not know $\Rot_{gt}$ under indirect rotation supervision (\textit{e.g.}, flow loss as used in PoseCNN\cite{xiang2017posecnn}) and self-supervised rotation estimation cases (\textit{e.g.}, 2D mask loss as used in \cite{wang2020self6d}). 

In this work, we introduce the following techniques to mitigate these problems: (i) we first take a Riemannian gradient to compute a goal rotation $\Rot_{g} \in \SO$, which does not rely on knowing $\Rot_{gt}$, as explained in Section \ref{sec:rg}; (ii) we then find the set of all possible $\x_g$s that can be mapped to $\Rot_{g}$, or in other words, the inverse image of $\Rot_{g}$ under $\pi$ and $\phi$; (iii) we find $\x_{gp}$ which is the element in this set closest to $\x$ in the Euclidean metric and set it as ``$\x^*$''. We will construct our gradient using this $\x^*$, as explained in \ref{sec:pmg}. (iv) we add a regularization term to this gradient forming $\g_{RPMG}$ as explained in \ref{sec:rpmg}. The whole backward pass leveraging our proposed regularized projective manifold gradient is shown in the lower half of Figure \ref{fig:pip}.

\vspace{-1mm}
\subsection{Riemannian Gradient and Goal Rotation}
\vspace{-1mm}
\label{sec:rg}
To handle rotation estimation with/without direct rotation supervision, we first propose to compute the Riemannian gradient of the loss function $\Loss$ with respect to the output rotation $\Rot$ and find a goal rotation $\Rot_{g}$ that is presumably closer to the ground truth rotation than $\Rot$.

Assume the loss function is in the following form $\Loss(f(\Rot))$, where $\Rot = \pi(\phi(\x))$ is the output rotation and $f$ constructs a loss function that compares $\Rot$ to the ground truth rotation $\Rot_{gt}$ directly or indirectly. Given $\Rot(\x)$ and $\Loss(f(\Rot(\x)))$, we can perform one step of Riemannian optimization yielding our goal rotation $\Rot_{g} \gets R_{\Rot}(-\tau\,
\grad{~\Loss(f(\Rot}))),$
where $\tau$ is the step size of Riemannian gradient and can be set to a constant as a hyperparameter or varying during the training. For L2 loss $\|\Rot - \Rot_{gt}\|_F^2$, the Riemannian gradient is always along the geodesic path between ${\Rot}$ and $\Rot_{gt}$ on SO(3)\cite{huynh2009metrics}. In this case, $\Rot_{g}$ can generally be seen as an intermediate goal between $\Rot$ and $\Rot_{gt}$ dependent on $\tau$.
Gradually increasing $\tau$ from 0 will first make $\Rot_{g}$ approach $\Rot_{gt}$ starting with $\Rot_{g}=\Rot$, and then reach $\Rot_{gt}$ where we denote $\tau = \tau_{gt}$, and finally going beyond $\Rot_{gt}$.
Although, when $\Rot_{gt}$ is available, one can simply set $\Rot_{g} = \Rot_{gt}$, we argue that this is just a special case under $\tau =\tau_{gt}$. 
For scenarios where $\Rot_{gt}$ is unavailable, \textit{e.g.}, in self-supervised learning cases (see in Section \ref{sec:wo_gt}), we don't know $\Rot_{gt}$ and $\tau_{gt}$, thus we need to compute $\Rot_{g}$ using Riemannian optimization.
In the sequel, we only use $\Rot_{g}$ for explaining our methods without loss of generality.
See Section \ref{sec:rpmg} for how to choose $\tau$.

\subsection{Projective Manifold Gradient}
\vspace{-1mm}
\label{sec:pmg}

Given $\Rot_{g}$, we can use the representation mapping $\psi$ to find the corresponding $\hat{\x}_g = \psi(\Rot_g)$ on the representation manifold $\Man$. However, further inverting $\pi$ and finding the corresponding $\x_g \in \mathcal{X}$ is a non-trivial problem, due to the projective nature of $\pi$.
In fact, there are many $\x_g$s that satisfy $\pi(\x_g) = \hat{\x}_g$.
It seems that we can construct a gradient $\mathbf{g} = (\x - \x_g)$ using any $\x_g$ that satisfies $\pi(\x_g) = \hat{\x}_g$. No matter which $\x_g$ we choose, if this gradient were to update $\x$, it will result in the same $\Rot_g$. 
But, when backpropagating into the network, those gradients will update the network weights differently, potentially resulting in different learning efficiency and network performance. 

Formally, we formulate this problem as \emph{a multi-ground-truth problem} for $\x$: we need to find the best $\x^*$ to supervise from the inverse image of $\hat{\x}_g$ under the mapping $\pi$. 
We note that similar problems have been seen in pose supervision dealing with symmetry as in \cite{wang2019normalized}, where one needs to find one pose to supervise when there are many poses under which the object appears the same. \cite{wang2019normalized} proposed to use a min-of-N strategy introduced by \cite{fan2017point}: from all possible poses, taking the pose that is closest to the network prediction as ground truth. A similar strategy is also seen in supervising quaternion regression, as $\q$ and $-\q$ stand for the same rotation. One common choice of the loss function is therefore $\min \{\mathcal{L}(\q, \q_{gt}), \mathcal{L}(\q,-\q_{gt})\}$\cite{peretroukhin_so3_2020}, which penalizes the distance to the closest ground truth quaternion.

Inspired by these works, we propose to choose our gradient among all the possible gradients with the lowest level of redundancy, \textit{i.e.},  we require
$\x^*$ to be the one closest to $\x$, or in other words, the gradient to have the smallest norm, meaning that we need to find the projection point $\x_{gp}$ of $\x$ to all the valid $\x_g$:
\begin{equation}
    \x_{gp} = \underset{\pi(\x_g) = \hat{\x}_g}{\text{argmin}}~\|\x - \x_g\|_2
\end{equation}
We then can construct our \textit{projective manifold gradient} (PMG) as $\g_{PM} = \x - \x_{gp}$. We will denote the naive gradient $\g_{M} = \x - \hat{\x}_{g}$ as  \textit{manifold gradient} (MG).

Here we provide another perspective on why a network may prefer PMG. In the case where a deep network is trained using stochastic gradient descent (SGD), the final gradient used to update the network weights is averaged across the gradients of all the batch instances. 
If gradients from different batch instances contain different levels of redundancy, then the averaged gradient may be biased or not even appropriate.
This argument is generally applicable to all stochastic optimizers (\textit{e.g.}, Adam~\cite{Adams2020AFD}) 

\begin{figure}
\hspace{2mm}
\begin{minipage}{0.4\linewidth}
    \centering
    \includegraphics[width=1\linewidth]{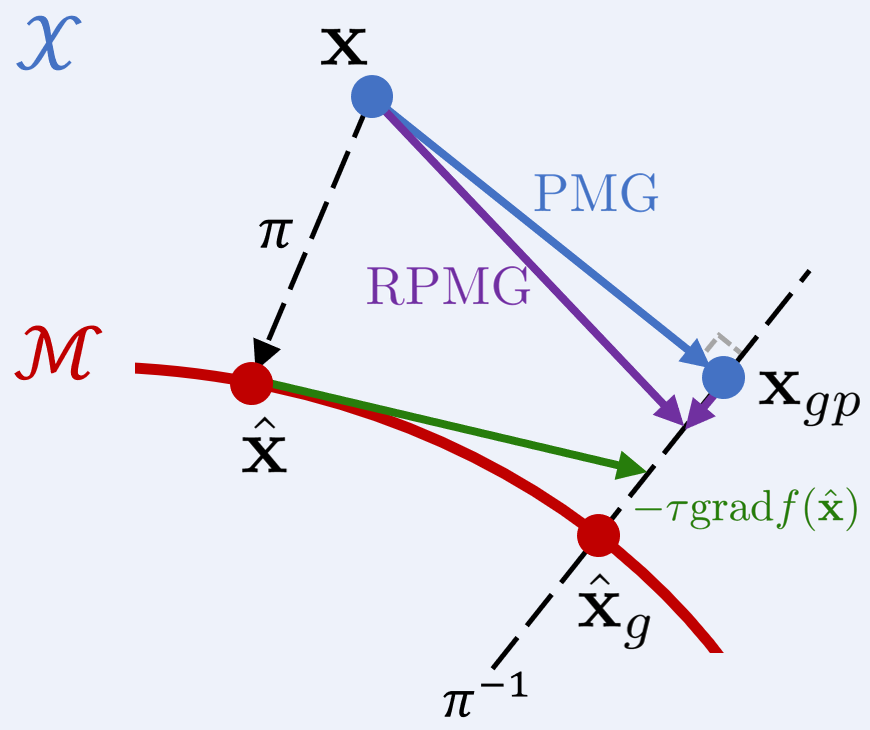}
     
\end{minipage}\hfill 
\begin{minipage}{0.45\linewidth}
    \centering

 \includegraphics[width=1\columnwidth]{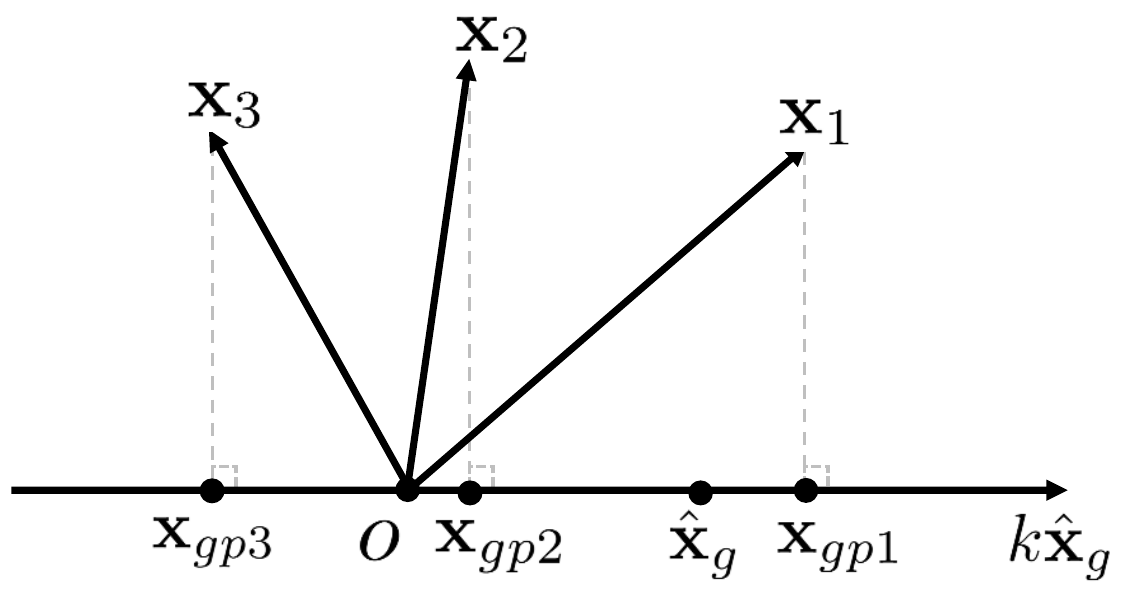}
 \end{minipage}
 \hspace{2mm}
\caption{\textbf{Illustration for regularized projective manifold gradient.} \textbf{Left}: In the forward pass, we simply project $\x$ to $\hat{\x}$ by $\pi$. In the backward pass, first we compute a Riemannian gradient, which is shown as the \textit{green} arrow. After getting a next goal $\hat{\x}_{g}\in\Man$ by Riemannian optimization, we find the inverse projection $\x_{gp}$ of $\hat{\x}_g$, which leads to our \textit{projective manifold gradient}, shown as the \textit{blue} arrow. With a regularization term, we can get our final \textit{regularized projective manifold gradient}, as the \textit{purple} arrow. \textbf{Right}: Projection point $\hat{\x}_{gp}$ in the case of quaternion.}
     \vspace{-1mm}
    \label{fig:reg}
\end{figure}

\noindent\textbf{Inverting $\pi$.} There are many ways to solve this projection problem for different manifold mapping functions $\pi$. For example, we can formulate this as a constrained optimization problem. For the manifold mapping functions we consider, we propose the following approach: we first solve for the inverse image $\pi^{-1}(\hat{\x}_g)$ of $\hat{\x}_g$ in the ambient space $\mathcal{X}$ analytically, which reads
$\pi^{-1}(\hat{\x}_g) = \{\x_g \in \mathcal{X}~|~\pi(\x_g) = \hat{\x}_g\}$; we then project $\x$ onto this inverse image space.
Note that, sometimes only a superset of this inverse image can be found analytically, requiring certain constraints on $\x_{gp}$ to be enforced.

Here we list the inverse image $\pi^{-1}(\hat{\x}_g)$ and the projection point $\x_{gp}$ for different rotation representations and their corresponding manifold mapping $\pi$. Please refer to Appendix \ref{sec:inverse_proj} for detailed derivations.

\noindent\textbf{Quaternion.} With $\pi_q(\x) = \x/{\|\x\|}$, $\x \in \R^4$, and $\hat{\x}_g \in \mathcal{S}^{3}$: $\pi^{-1}_{q}(\hat{\x}_g) = \{\x~|~\x = k\hat{\x}_g, k\in \mathbb{R}~\text{and}~k>0\}$, which is a ray in the direction of $\hat{\x}_g$ starting from the origin. Without considering the constraint of $k>0$ , an analytical solution to this projection point $\x_{gp}$ of $\x$ onto this line can be derived: $\x_{gp} = (\x\cdot\hat{\x}_g)\hat{\x}_g$.

\noindent\textbf{6D representation.} 
With $\pi_{6D}$ as Gram-Schmidt process, $\x=[\u, \v] \in \R^6$, and $\hat{\x_g}\in \mathcal{V}_2(\mathbb{R}^{3})$:  $\pi^{-1}_{6D}(\hat{\x}_g) = \{[k_1\hat{\u}_g, k_2\hat{\u}_g+k_3\hat{\v}_g]~|~k_1,k_2,k_3 \in \R~\text{and}~k_1, k_3>0\}$ (the former is a ray whereas the latter spans a half plane). Without considering the constraint of $k_1,k_3>0$, the projection point $\x_{gp}$ can be analytically represented as  $\x_{gp} = [(\u\cdot\hat{\u}_g)\hat{\u}_g, (\v\cdot\hat{\u}_g)\hat{\u}_g+(\v\cdot\hat{\v}_g)\hat{\v}_g]$

\noindent\textbf{9D representation.} 
With $\pi_{9D}(\x)$ as SVD orthogonalization, $\x \in \R^{3\times 3}$, and $\hat{\x}_g\in$ SO(3), the analytical expression for $\pi_{9D}^{-1}$ is available when we ignore the positive singular value constraints, which gives $\pi^{-1}_{9D}(\hat{\x}_g) = \{\mathbf{S}\hat{\x}_g~|~\mathbf{S}=\mathbf{S^\top}\}$. We can further solve the projection point $\x_{gp}$ with an elegant representation $\x_{gp} = \frac{\x\hat{\x}_g^{T}+\hat{\x}_g\x^{T}}{2}$.

\noindent\textbf{10D representation.}
Please refer to Appendix \ref{sec:inverse_proj} for the derivation and expression of $\x_{qp}$.

\vspace{-1mm}
\subsection{Regularized Projective Manifold Gradient}
\vspace{-1mm}

\label{sec:rpmg}

\noindent\textbf{Issues in naive projective manifold gradient.}
\label{issues}
In the right plot of Figure \ref{fig:reg}, we illustrate this projection process for several occasions where $\x$ takes different positions relative to $\x_g$. 
We demonstrate that there are two issues in this process. 

First, no matter where $\x$ is in, the projection operation will shorten the length of our prediction because $\|\x_{gp}\| < \|\x\|$ is always true for all of 4D/6D/9D/10D representation. This will cause the length norm of the prediction of the network to become very small as the training progresses (see Figure \ref{fig:length}). The shrinking network output will keep increasing the effective learning rate, preventing the network from convergence and leading to great harm to the network performance (see Table \ref{tab:ablation} and Figure \ref{fig:length} for ablation study).

Second, when the angle between $\x$ and $\hat{\x}_g$ becomes larger than $\pi/2$ (in the case of $\x = \x_3$), the naive projection $\x_{gp}$ will be in the opposite direction of $\hat{\x}_g$ and can not be mapped back to $\hat{\x}_g$ under $\pi_{q}$, resulting in a wrong gradient. The same set of issues also happens to 6D, 9D and 10D representations. The formal reason is that the analytical solution of the inverse image assumes certain constraints are satisfied, which is usually true only when either $\hat{\x}_g$ is not far from $\x$ or the network is about to converge. 

\noindent\textbf{Regularized projective manifold gradient.}
To solve the first issue, we propose to add a regularization term $\x_{gp} - \hat{\x}_g$ to the projective manifold gradient, which can avoid the length vanishing problem. The \textit{regularized projective manifold gradient} then reads:
\begin{equation}
  \g_{RPM} = \x - \x_{gp} + \lambda(\x_{gp} - \hat{\x}_g),  
\end{equation}
where $\lambda$ is a regularization coefficient. See the left plot of Figure \ref{fig:reg} for an illustration.

\noindent\textbf{Discussion on the hyperparameters $\lambda$ and $\tau$.}
Our method apparently introduces two additional hyperparameters, $\lambda$ and $\tau$, however, we argue that this doesn't increase the searching space of hyperparameters for our method.

For $\lambda$, the only requirement is that  $\lambda$ is small (we simply set to 0.01), because: (1) we want the projective manifold gradient ($\x - \x_{gp}$) to be the major component of our gradient; (2) since this regularization is roughly proportional to the difference in prediction length and a constant, a small lambda is enough to prevent the length from vanishing and, in the end, the prediction length will stay roughly constant at the equilibrium under projection and regularization. In the ablation study of Section \ref{sec:pc_rotation}, we show that the performance is robust to the change of $\lambda$. Note that, on the other extreme, when $\lambda = 1$, $\g_{RPM}$ becomes $\g_{M}$.

For $\tau$, we propose a ramping up schedule which is well-motivated. 
To tackle the second problem of reversed gradient, we need a small $\tau_{init}$ to keep $\Rot_g$ close to $\Rot$ at the beginning of training. But when the network is about to converge, we will prefer a $\tau_{converge}$ which can keep $\Rot_g$ close to $\Rot_{gt}$ for better convergence. We cannot directly set $\tau_{converge}$ to $\tau_{\gt}$, which is introduced in \ref{sec:rg}, because $\tau_{\gt}$ is not a constant and cannot be used in Riemannian Optimization. However, if we want to tackle the problem of reversed gradient, we must need Riemannian Optimization and $\tau_{init}$. Thus we need a constant approximation of $\tau_{\gt}$ when the angle between $\Rot$ and $\Rot_{gt}$ converges to 0. Note that $\tau_{converge}$ can be derived analytically when the loss function is the most widely used L2 loss or geodesic loss(please refer to Appendix \ref{sec:supp2.1} for details), and therefore doesn't need to be tuned.  Therefore we propose to increase $\tau$ from a small value $\tau_{init}$, leading to a slow warm-up and, as the training progresses, we gradually increase it to the final $\tau=\tau_{converge}$ by ten uniform steps.  This strategy further improves our performance.

\vspace{-2mm}
\section{Experiments}
\vspace{-1mm}
\label{sec:exp}

\begin{table*}[htbp]
    \caption{\textbf{Pose estimation from ModelNet40 point clouds.} Left: a comparison of methods by mean, median, and 5$^\circ$ accuracy of (geodesic) errors after 30k training steps. Mn, Md and Acc are abbreviations of mean, median and 5$^\circ$ accuracy.  Right: median test error of \textit{airplane} in different iterations during training. }
    \vspace{-4mm}
    \begin{minipage}[h]{0.7\columnwidth}
        \resizebox{\columnwidth}{!}{
        \begin{tabular}{l|ccc|ccc|ccc|ccc|ccc}
           \multirow{2}{*}{Methods} &&Airplane&&&Chair&&&Sofa &&&Toilet&&&Bed&\\
              \cmidrule{2-16}   
              & Mn$\downarrow$ & Md$\downarrow$ & Acc$\uparrow$ 
              & Mn$\downarrow$ & Md$\downarrow$ & Acc$\uparrow$ 
              & Mn$\downarrow$ & Md$\downarrow$ & Acc$\uparrow$ 
              & Mn$\downarrow$ & Md$\downarrow$ & Acc$\uparrow$ 
              & Mn$\downarrow$ & Md$\downarrow$ & Acc$\uparrow$  \\
                \midrule   
             Euler       &       125 & 131& 0 &13.6&9.0&17&120&125&0&127&133&0&113&122&0  \\
             Axis-Angle   &      10.8 & 8.2 & 22  &16.4&10.9&9 &24.1&14.6&6&21.9&13.0&9&25.5&11.0&16 \\
            Quaternion & 9.7 & 7.6  & 27 & 16.7& 11.4& 12&20.4& 12.7& 10& 16.0& 9.3& 17 &27.8& 11.3& 14\\
            6D & 5.5 & 4.7 &54 &9.8& 6.4& 35&14.6& 9.5& 15&9.3& 6.8& 33&24.7& 9.6& 17\\
            9D &  4.7 & 3.9 &67 & 7.9& 5.4& 44&15.7& 10.0& 14&10.3& 6.9& 30&22.3& 8.5& 20\\
            9D-Inf~(MG-9D) &3.1 & 2.5 &90 & 5.3& 3.7& 69&7.8& 5.0& 50 &4.2& 3.3& 75 &12.9& 4.6& 55 \\
            10D & 5.3 & 4.2 & 61  &8.9& 6.0& 38&15.1& 10.3& 13&10.7& 6.5& 35& 23.1& 8.7& 19\\
            \midrule
            RPMG-Quat   & 3.2 & 2.4 & 88   &6.3 &3.7 &67 &8.1& 4.5& 57&4.9& 3.5& 74&13.3& 3.6& 70\\
            RPMG-6D    &  2.6 & 2.1&\textbf{94}   &\textbf{5.0} &\textbf{3.1}& 74 &6.6& 3.6& 70&\textbf{3.8}& 2.9& \textbf{83}& 13.5& 2.7& 81\\
            RPMG-9D   & \textbf{2.5} & \textbf{2.0}&\textbf{94}    &    5.1&\textbf{3.1} & \textbf{76} &\textbf{6.1}& \textbf{3.1}& \textbf{77}&4.3& \textbf{2.7}& \textbf{83}&
            \textbf{10.9}& \textbf{2.5}& \textbf{86}\\
            RPMG-10D  & 2.8 & 2.2 & 93 &5.1 &3.2 &75&  6.5& 3.2& 72&4.9& 2.8& 82&13.5& 2.7& 82\\
            \midrule 
            \end{tabular}
            }
           
        \end{minipage}
    \begin{minipage}[h]{0.29\columnwidth}
        \includegraphics[width=\columnwidth]{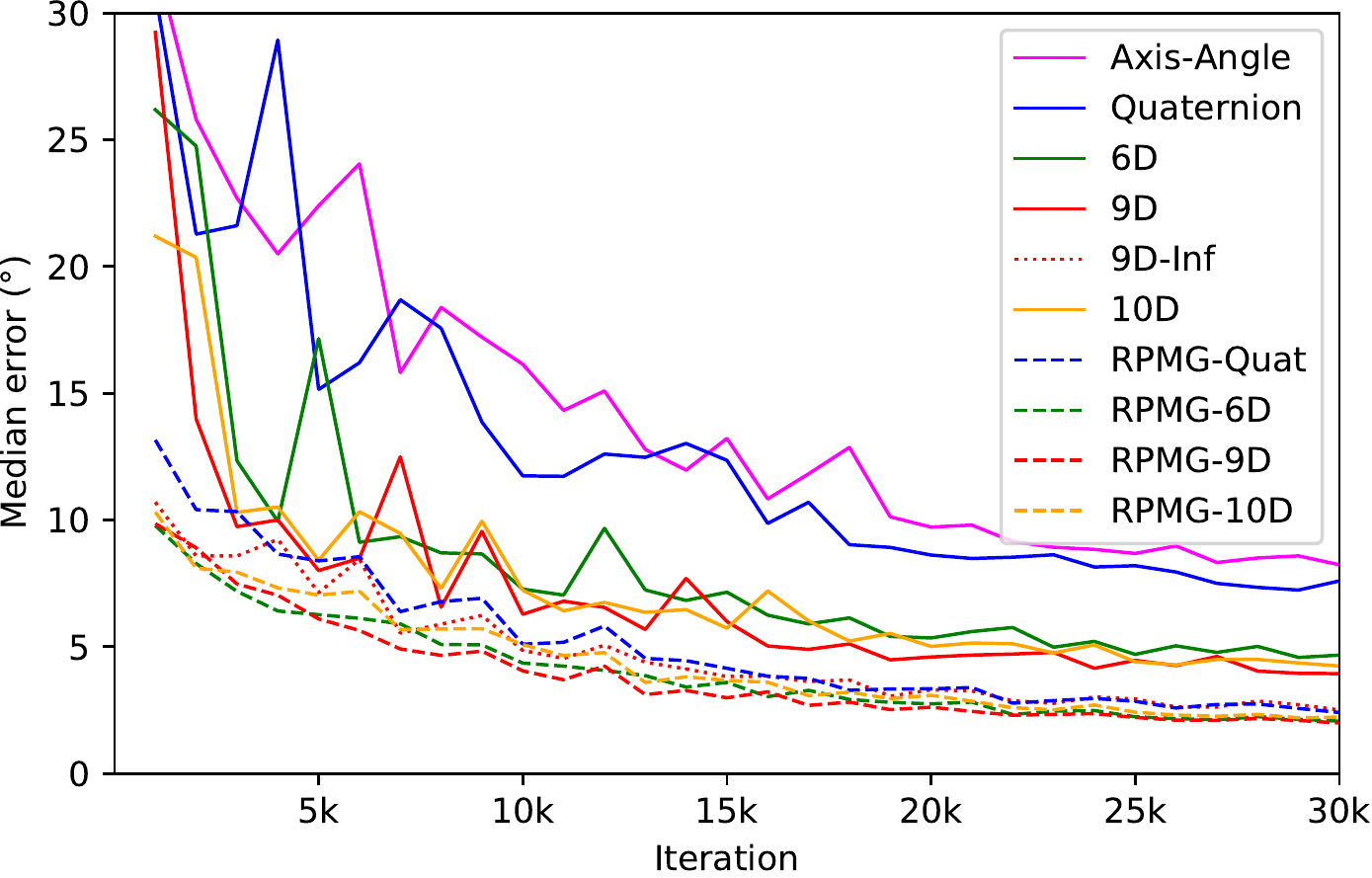}
    \end{minipage}
  \label{tab:rotation_complete}
\end{table*}

We investigate popular rotation representations and find our methods greatly improve the performance in different kinds of tasks. For our regularized projective manifold gradient (\textbf{RPMG}), we apply it in the backpropagation process of Quaternion, 6D, 9D and 10D, without changing the forward pass, leading to three new methods \textbf{RPMG-Quat}, \textbf{RPMG-6D}, \textbf{RPMG-9D} and \textbf{RPMG-10D}.
We  compare the following seven baselines: \textbf{Euler angle}, \textbf{axis-angle}, \textbf{Quaternion}, \textbf{6D} \cite{zhou2019continuity}, \textbf{9D} \cite{levinson2020analysis}, \textbf{9D-Inf} \cite{levinson2020analysis} and \textbf{10D} \cite{peretroukhin_so3_2020}. 
We adopt three evaluation metrics: mean, median, and 5$^\circ$ accuracy of (geodesic) errors between predicted rotation and ground truth rotation.
For most of our experiments, we set the regularization term $\lambda=0.01$ and increase $\tau$ from $\tau_{init}=0.05$ to $\tau_{converge}=0.25$ by ten uniform steps. We further show and discuss the influence of different choices of these two hyperparameters in our ablation studies.

\vspace{-1mm}
\subsection{3D Object Pose Estimation from Point Clouds}
\vspace{-1mm}
\label{sec:pc_rotation}
\noindent\textbf{Experimental setting.} As in \cite{chen2021equivariant}, we use the complete point clouds generated from the models in ModelNet-40 \cite{wu2015modelnet}. We use the same train/test split as in \cite{chen2021equivariant} and report the results of \textit{airplane}, \textit{chair}, \textit{sofa}, \textit{toilet} and \textit{bed} those five categories because they exhibit less rotational symmetries.
Given one shape point clouds of a specific category, the network learns to predict the 3D rotation of the input point clouds from the predefined canonical view of this category\cite{wang2019normalized}. 
We replace the point clouds alignment task used in \cite{zhou2019continuity, levinson2020analysis} (which has almost been solved) by this experiment since it is more challenging and closer to real-world applications (no canonical point clouds is given to the network).

We use a PointNet++ \cite{qi2017pointnetplusplus} network as our backbone, supervised by L2 loss between the predicted rotation matrix $\Rot$ and the ground truth rotation matrix $\Rot_{gt}$.  To facilitate a fair comparison between multiple methods, we use the same set of hyperparameters in all the experiments. Please see Appendix \ref{sec:imp_detail} for more details.

\noindent\textbf{Analysis of results.}
The results are shown in Table \ref{tab:rotation_complete}. We see a great improvement of our methods in all three rotation representations. In this experiment, one may find \textbf{9D-Inf} also leads to a good performance, which is actually a special case of our method with $\lambda=1$, or in other words, it is MG with $\tau=\tau_{gt}$. Nonetheless, in Table \ref{tab:modelnet_all}, we can observe a larger gap. Also, this simple loss may lead to bad performance when $\Rot_{gt}$ is unavailable in Section \ref{sec:self-supervised}. 

\begin{figure}[t]
\vspace{-4mm}
\begin{minipage}{0.5\linewidth}
    \centering
    \includegraphics[width=1\linewidth]{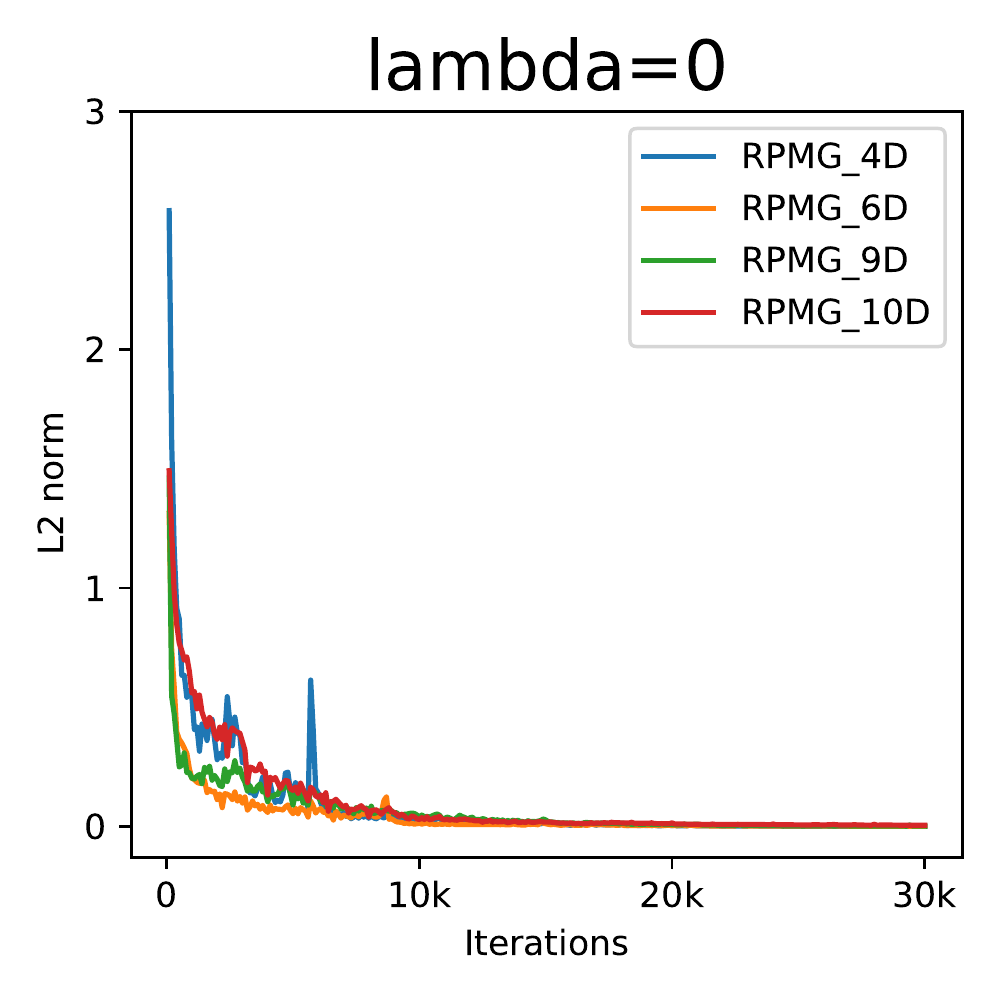}
     
\end{minipage}\hfill 
\begin{minipage}{0.5\linewidth}
    \centering
     \includegraphics[width=1\linewidth]{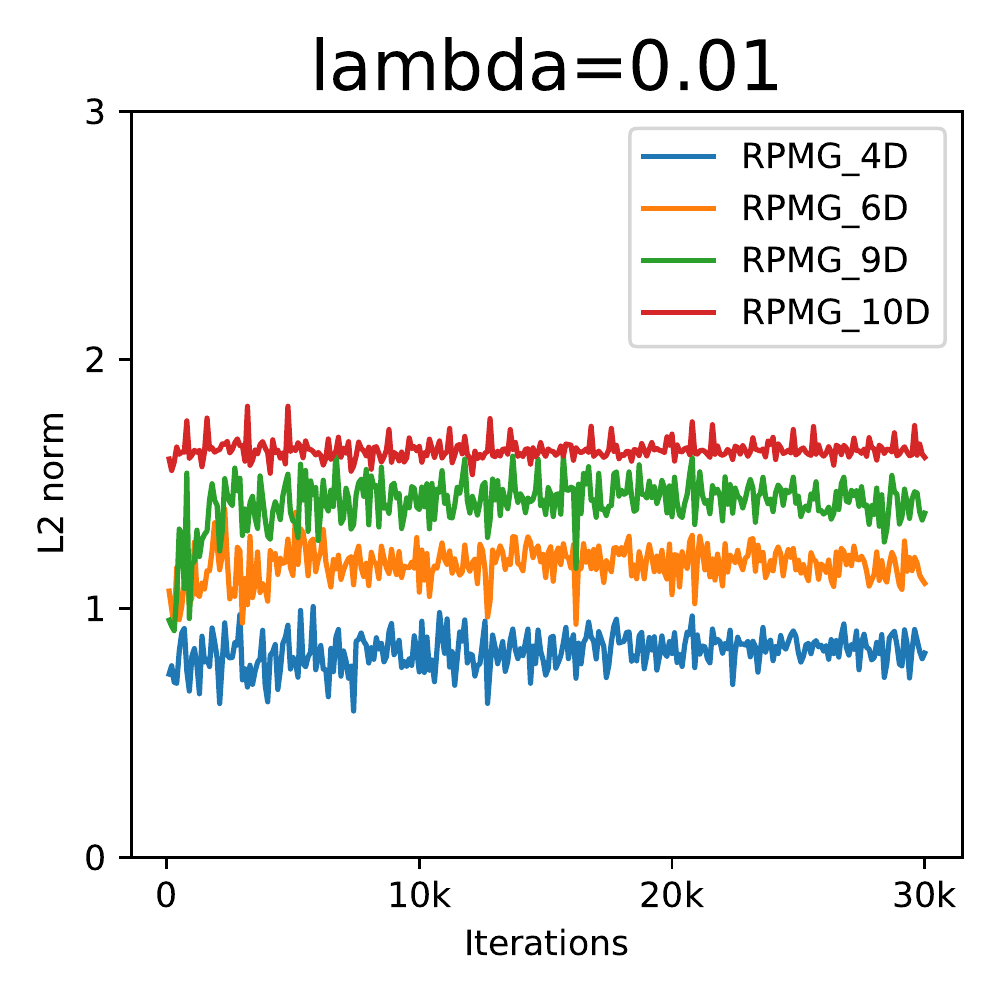}
    \caption{\textbf{Average L2 norm of the network raw output $\x$} during training. Left: PMG-4D/6D/9D/10D (w/o reg. $\lambda=0$). Right: RPMG-4D/6D/9D/10D (w/ reg. $\lambda=0.01$)}
    \label{fig:length}
\end{minipage}
\vspace{-3mm}
\end{figure}

\noindent\textbf{Ablation study on $\lambda$.} As mentioned in Section \ref{sec:rpmg}, naively using \textbf{PMG} without any regularization, corresponding to setting $\lambda = 0$, will lead to length vanishing; To maintain the length of prediction roughly constant, we only need to add a small $\lambda$. In Figure \ref{fig:length}, We show the length vanishing problem without regularization and stabilized length with a small regularization. In Table \ref{tab:rotation_complete}, we show that the network performs much better when we have a small $\lambda$ (\textbf{RPMG}) than $\lambda=0$ (\textbf{PMG}) or $\lambda=1$ (\textbf{MG}), which deviates too far away from the desired projective manifold gradient.  As for the exact value of $\lambda$, our experiments show that our method is robust to the choice of $\lambda$ as long as it is small. Table \ref{tab:ablation} also shows that $\lambda= 0.01, 0.005, 0.05$ all lead to similar performance, thus freeing us from tuning the parameter $\lambda$.

\noindent\textbf{Ablation study on $\tau$.}
For the choices of $\tau$, Table \ref{tab:ablation} shows that our proposed strategy, which ramps up $\tau$ from a small $\tau_\text{init}$ to  $\tau_\text{converge}$, works the best. The reason is that: a big $\tau$, when training begins, may cause the problem of reversed gradient discussed in Section \ref{issues}. On the other side, a small $\tau$ at the end of training will slow down the training process and can do harm to convergence.
Note that, the performance is not very sensitive to the exact value,
which means we don't require a parameter tuning for $\tau$ even in general cases. We are good even with simply setting $\tau = \tau_{gt}$.

\begin{table}[htbp]
    \centering
    \vspace{-2mm}
    \caption{\textbf{Ablation study of pose estimation from \textit{airplane} point clouds.} Here MG stands for manifold gradient $\x-\hat{\x}_g$, corresponding to set $\lambda=1$; PMG stands for projective manifold gradient $\x-\x_{gp}$, corresponding to set $\lambda=0$.}
      
    \begin{minipage}[h]{\columnwidth}
  \resizebox{\columnwidth}{!}{
    \begin{tabular}{p{40pt}ccccc}
         \multicolumn{3}{c}{Methods}&\multicolumn{1}{c}{Mean ($^\circ$)$\downarrow$} & \multicolumn{1}{c}{Med ($^\circ$)$\downarrow$} & \multicolumn{1}{c}{5$^\circ$Acc ($\%$)$\uparrow$} \\
                \midrule
                L2 6D &-&-
                &5.50&4.67&54.4\\
                \midrule
                \multirow{2}{*}{MG-6D} &\multirow{2}{*}{$\lambda=1$}&$\tau_{converge}$ 
                &3.51&2.95&85.2
                \\  
                &&$\tau_{gt}$ 
                &3.19 &2.72 &87.8
                \\
                \midrule
                 \multirow{2}{*}{PMG-6D} &\multirow{2}{*}{$\lambda=0$}&$\tau_{converge}$
               
                 &57.65&45.22&0.2
                 \\
                 &&$\tau_{gt}$
                 &133&136&0.0
                 \\
                 \midrule
                 \multirow{6}{*}{RPMG-6D} &\multirow{4}{*}{$\lambda=0.01$}&$\tau_{init}$
                 &2.67&2.18&93.1
                 \\
                 &&$\tau_{converge}$
             &  2.71&2.14&93.2 
                 \\
                 &&$\tau_{\gt}$&
               3.02 & 2.14 & 89.5 
                 \\
                 &&$\tau_{init}\rightarrow{}\tau_{converge}$
                &2.59&2.07&93.6
                 \\
                 \cmidrule{2-6}
             &$\lambda=0.05$&\multirow{2}{*}{$\tau_{init}\rightarrow{}\tau_{converge}$}& 
                2.73 & 2.23 & 92.9
                 \\
                 &$\lambda=0.005$ &
              &\textbf{2.52}&\textbf{2.05}  & \textbf{94.3}
                 \\
                 
                 \midrule
    \end{tabular}}
    \end{minipage}
    \label{tab:ablation}
     
\end{table}

\begin{table*}[htbp]
    \caption{\textbf{Pose estimation from ModelNet10 images.} Left: a comparison of methods by mean($^\circ$), median($^\circ$), and 5$^\circ$ accuracy($\%$) of (geodesic) errors after 600k training steps. Mn, Md and Acc are abbreviations of mean, median and 5$^\circ$ accuracy.  Right: median test error of \textit{chair} in different iterations during training.}
    \begin{minipage}[h]{0.65\columnwidth}
        \resizebox{\columnwidth}{!}{
            \begin{tabular}{l|ccc|ccc|ccc|ccc}
           
           \multirow{2}{*}{Methods}&& Chair &&&Sofa&&& Toilet&&&Bed&\\
            \cmidrule{2-13}  
                & Mn$\downarrow$ & Md$\downarrow$ & Acc$\uparrow$ 
                & Mn$\downarrow$ & Md$\downarrow$ & Acc$\uparrow$ & Mn$\downarrow$ & Md$\downarrow$ & Acc$\uparrow$ & Mn$\downarrow$ & Md$\downarrow$ & Acc$\uparrow$ 
                \\
                \midrule   
                Euler & 21.5 & 10.9 & 10 & 27.5 & 12.0 & 9  & 14.9 & 8.5 & 19 &27.6 & 9.6 & 17 \\
                Axis-Angle & 25.7 & 14.3 & 7 & 30.3 & 14.6 & 6 & 20.3 & 13.0 & 8 & 36.3&16.7&4   \\
                Quaternion & 25.8 & 15.0 & 6 & 30.0 & 15.7 & 6  &   20.6&13.0&8&34.1&15.5&5    \\
                6D & 19.6 & 9.1  & 19 & 17.5 & 7.3 & 27  &    10.9&6.2&37&32.3&11.7&11     \\
                9D & 17.5 & 8.3 & 23 & 19.8 & 7.6 & 25  &    11.8&6.5&34&30.4&11.1&13     \\
                9D-Inf & 12.1 & 5.1 & 49& 12.5 & 3.5 & 70 & 7.6&3.7&67&22.5&4.5&56     \\
                10D & 18.4 & 9.0 & 20 & 20.9 & 8.7 & 20 &  11.5& 5.9&39& 29.9&11.5&11      \\
                \midrule
                RPMG-Quat &  13.0 & 5.9 & 40 & 13.0 & 3.6 & 67  &   8.6&4.2&61& 23.2&4.9&51    \\
                RPMG-6D & 12.9 & 4.7 & 53 & 11.5 & 2.8 & 77 &      7.8&3.4&71&20.3&3.6&67     \\
                RPMG-9D & \textbf{11.9} & \textbf{4.4} & \textbf{58} & \textbf{10.5} & \textbf{2.4} & \textbf{82}  &     7.5&3.2&75&20.0&\textbf{2.9}&\textbf{76}    \\
                 RPMG-10D & 12.8& 4.5& 55 & 11.2&\textbf{2.4}&\textbf{82}& \textbf{7.2}&\textbf{3.0}&\textbf{76}&\textbf{19.2}&\textbf{2.9}&75\\
                  \midrule
            \end{tabular}}
        \end{minipage}
    \begin{minipage}[h]{0.34\columnwidth}
        \includegraphics[width=\columnwidth]{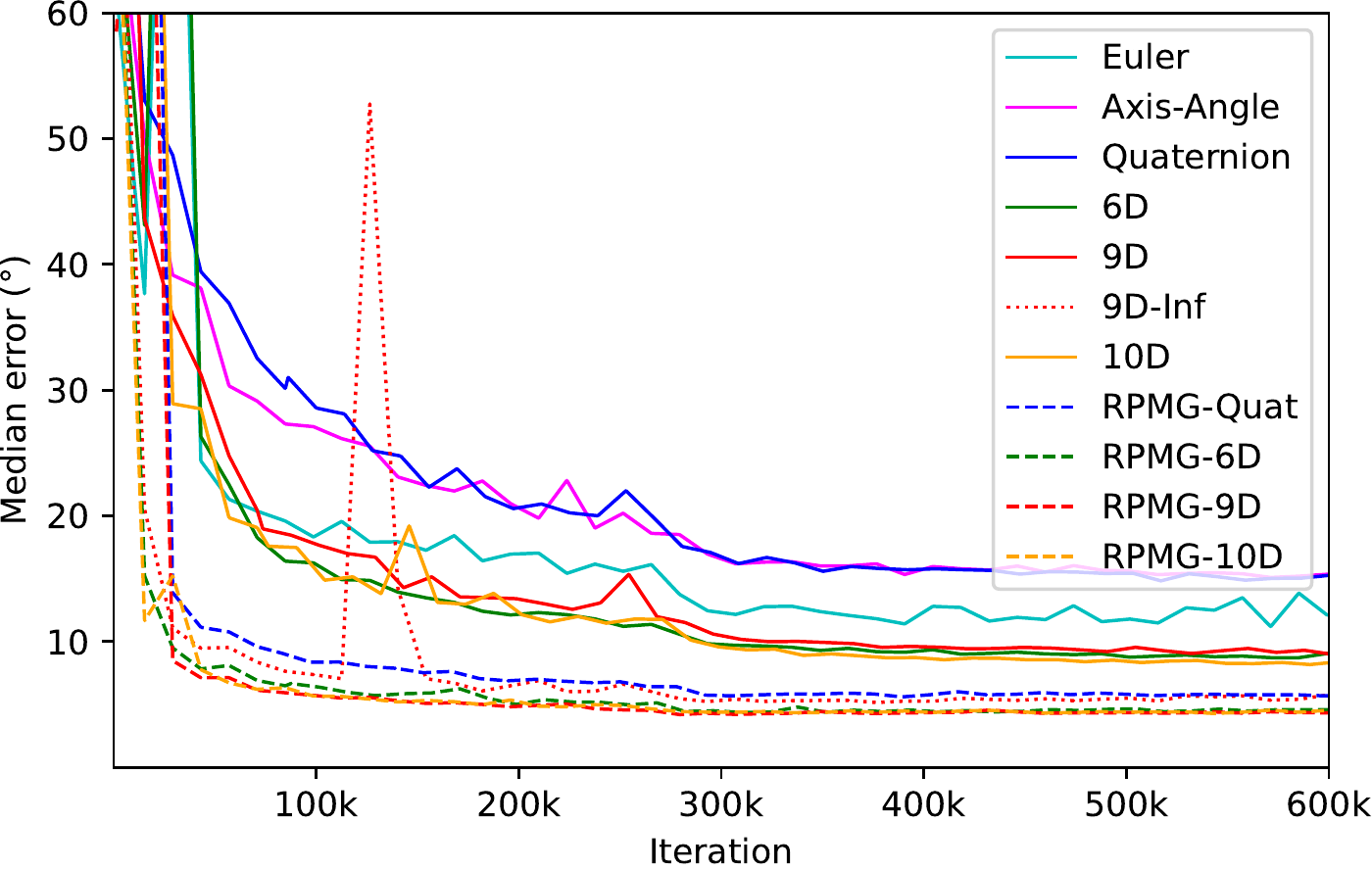}
    \end{minipage}
  \label{tab:modelnet_all}
\end{table*}
\vspace{-3mm}
\subsection{3D Rotation Estimation from ModelNet Images}
\label{sec:image}

In this experiment, we follow the setting in \cite{levinson2020analysis} to estimate poses from 2D images. Images are rendered from ModelNet-10 \cite{wu2015modelnet} objects from arbitrary viewpoints \cite{liao19sphere}. A MobileNet \cite{mobilenet} is used to extract image features and three MLPs to regress rotations.
We use the same categories as in Experiment \ref{sec:pc_rotation} except \textit{airplane}, since ModelNet-10 doesn't have this category.
We didn't quote the numbers from \cite{levinson2020analysis} since we conduct all the experiments using the same set of hyperparameters to ensure a fair comparison. Please see Appendix \ref{sec:img_lr} for more details.

The results are shown in Table \ref{tab:modelnet_all}. Our RPMG layer boosts the performance of all three representations significantly. See the curves with the same color for comparison.

\subsection{Rotation Estimation without Supervision}

\label{sec:wo_gt}
\noindent\textbf{Self-supervised instance-level rotation estimation from point clouds.}
\label{sec:self-supervised}
For one complete chair instance $Z$, given a complete observation $X$, we estimate its pose $\Rot$. We then use Chamfer distance between $Z$ and $\Rot^{-1}X$ as a self-supervised loss. The network structure and training settings are all the same as Experiment \ref{sec:pc_rotation}, except here we use $\tau=2$. See Appendix \ref{sec:flow loss} for how to find a suitable $\tau$.

The interesting thing here is that vanilla \textbf{9D-Inf} fails while our methods still perform very well. We think that this is because the Chamfer distance loss will greatly enlarge the effect of the noisy part (which is introduced by $\lambda$) in gradient, leading to a very bad performance.

\begin{table}[htbp]

 \caption{\textbf{Self-supervised Instance-Level Rotation Estimation from Point Clouds.} We report mean, median and 3$^\circ$ accuracy of (geodesic) errors after 30K iterations.}
 \resizebox{0.6\columnwidth}{!}{
        \begin{tabular}{lccc}
            Methods& \multicolumn{1}{c}{Mean ($^\circ$)$\downarrow$} & \multicolumn{1}{c}{Med ($^\circ$)$\downarrow$} & \multicolumn{1}{c}{3$^\circ$Acc ($\%$)$\uparrow$} \\
                \midrule  
                Euler       &       131.9 & 139.1 & 0.0   \\
                Axis-Angle   &      4.5 & 3.8 & 34.5   \\
                Quaternion   &      4.3 & 3.5 & 37.5    \\
                6D          &       55.1 & 6.7 & 20.0   \\
                9D          &       1.8 & 1.6 & 88.0  \\
                9D-Inf      &       118.2 & 119.5 &0.0\\
                10D         &       1.6 & 1.5 & 91.0   \\
                \midrule
                RPMG-Quat     &     3.5 & 2.4 & 70.0  \\
                RPMG-6D       &     15.0 & 2.9 & 55.0 \\
                RPMG-9D       &     \textbf{1.3} & \textbf{1.2} & \textbf{97.5}  \\
                RPMG-10D       &     1.5 & 1.4 & 97.0
            \end{tabular}}
        
\end{table}

\vspace{-3mm}
\subsection{Regression on Other Non-Euclidean Manifolds}

\label{sec:other manifolds}
In addition to SO(3), our method can also be applied for regression on other non-Euclidean manifolds as long as the target manifold meets some conditions: 1) the manifold should support Riemannian optimization. 2) the inverse projection $\pi^{-1}$ should be calculable, although it doesn't need to be mathematically complete. Here we show the experiment of \textit{Sphere manifold }$\mathcal{S}^2$. 

\noindent\textbf{Unit vector regression.}
For rotational symmetric categories (e.g., \textit{bottle}), the pose of an object is ambiguous. We'd rather regress a unit vector for each object indicating the \textit{up} direction of it. 
We use the ModelNet-40\cite{wu2015modelnet} \textit{bottle} point cloud dataset. 
The network architecture is the same as in Experiment \ref{sec:pc_rotation} except the dimension of output is 3.

L2-loss-w/-norm computes L2 loss between the normalized predictions and the ground truth. L2-loss-w/o-norm computes L2 loss between the raw predictions and the ground truth, similar to $\lambda=1$ and $\tau=\tau_{gt}$. For MG-3D, PMG-3D and RPMG-3D, We increase $\tau$ from 0.1 to 0.5 since here $\tau_{converge}=0.5$ (please see Appendix \ref{sec:supp3} for the derivation).

The results are shown in Table \ref{tab:unit}. 
MG-3D performs on par with L2-loss-w/o-norm, and PMG-3D leads to a large error since the length vanishing problem similar to Figure \ref{fig:length}. RPMG-3D outperforms all the baselines and variants.

\begin{table}[htbp]
  \centering
  \vspace{-1mm}
  \caption{\textbf{Unit vector estimation from ModelNet bottle point clouds.} We report mean, median, and 1$^\circ$ accuracy of (geodesic) errors after 30K iterations.}
  \vspace{-1mm}
  \begin{minipage}[h]{0.7\columnwidth}
  \resizebox{\columnwidth}{!}{
    \begin{tabular}{lccc}
         
Methods & \multicolumn{1}{l}{Mean ($^\circ$)$\downarrow$} & \multicolumn{1}{l}{Med ($^\circ$)$\downarrow$} & \multicolumn{1}{l}{1$^\circ$Acc ($\%$)$\uparrow$}  \\
\midrule    
    L2 loss w/ norm     & 8.73   & 2.71  & 0.0       \\
    L2 loss w/o norm    & 5.71   &  1.10 & 37.4     \\
    \midrule
    MG-3D ($\lambda$=1)    & 5.37    & 1.20  & 22.2     \\
    PMG-3D ($\lambda$=0)                 & 21.96   &  14.79 & 0.0       \\
    \midrule
    RPMG-3D ($\lambda$=0.01)               & \textbf{4.69}   & \textbf{0.76} & \textbf{72.7}    \\
    \end{tabular}}
    \end{minipage}
  \label{tab:unit}%
\end{table}%
\vspace{-2mm}

\vspace{-3mm}
\section{Conclusion and Future Work}
\label{sec:conclusion}
Our work tackles the problem of designing a gradient layer to facilitate the learning of rotation regression. Our extensive experiments have demonstrated the effectiveness of our method coupled with different rotation representations in diverse tasks dealing with rotation estimation.

The limitation of our methods mainly lies in two fronts: 1) we introduce two new hyperparameters, \textit{i.e.}, $\tau$ and $\lambda$, though our performance is not sensitive to them, as long as they are in a reasonable range; 2) as discussed in Sec \ref{sec:other manifolds}, our method can only be applied to manifolds with certain constraints. We leave how to relax those to future works.


{\small
\bibliographystyle{ieee_fullname}
\bibliography{egbib}
}

\clearpage
\appendix

\section{More on Riemannian Geometry}
\label{sec:supp1}
In this part, we supplement the definitions in Section \ref{sec:prelim} to allow for a slightly more rigorous specification of the exponential map for interested readers.

We denote the union of all tangent spaces as the \emph{tangent bundle}: $\TM = \cup_{\x\in\Man}\TxM$. 
Riemannian metric $\G_\x$ induces a norm $\|\u\|_\x\,,\forall \u\in\TxM$ locally defining the geometry of the manifold and allows for computing the \emph{length} of any curve $\curve : [0,1] \rightarrow \mathcal{M}$, with $\curve(0) = \mathbf{x}$ and $\curve(1) = \mathbf{y}$ as the integral of its speed: $\len(\curve) = \int_{0}^1 \|\dcurve(t)\|_{\curve(t)}dt$. The notion of length leads to a natural notion of distance by taking the infimum over all lengths of such curves, giving the \emph{Riemannian distance} on $\Man$, $d(\x,\y)=\inf_{\curve}\len(\curve)$. The constant speed \emph{length minimizing} curve $\curve$ is called a \emph{geodesic} on $\Man$. 

By the celebrated Picard Lindelöf theorem~\cite{coddington1955theory}, given any $(\x,\v)\in\TM$, there exists a unique \emph{maximal}\footnote{\emph{maximal} refers to the fact that the curve is as long as possible.} geodesic $\geo$ such that $\geo(0)=\x$ and $\dgeo(0)=\v$. Hence, we can define a unique diffeomorphism or \emph{exponential map}, sending $\x$ to the endpoint of the geodesic: $\exp_\x(\v)=\geo(1)$. We will refer to the well-defined, smooth inverse of this map as the \emph{logaritmic map}: $\log_{\x}{\y}\triangleq \exp^{-1}_\x(\v)$. Note that the geodesic is not the only way to move away from $\x$ in the direction of $\v$ on $\Man$. In fact, any continuously differentiable, smooth map $\Rx:\TxM\mapsto \Man$ whose directional derivative along $\v$ is identity, \ie $\mathrm{D} \Rx(\zero)[\v]=\v$ and $\Rx(\zero) = \x$ allows for moving on the manifold in a given direction $\v$. Such $\Rx$, called \emph{retraction}, constitutes the basic building block of any on-manifold optimizer as we use in the main paper. In addition to those we also speak of a \emph{manifold projector} $\pi:\Amb\mapsto\Man$ 
is available for the manifolds we consider in this paper. Note that, most of these definitions directly generalize to matrix manifolds such as Stiefel or Grassmann~\cite{absil2009optimization}.

\section{Projective Manifold Gradient on $SO(3)$}

\subsection{Details of Riemannian Optimization on $SO(3)$}
\label{sec:supp2.1}
\paragraph{Riemannian gradient on $SO(3)$.}
Since we mainly focus on the $\SO$ manifold in this paper, we will further show the specific expression of some related concepts of $\SO$ below.

Firstly, $\SO$ is defined as a matrix subgroup of the general linear group $GL(3)$:
\begin{equation}
\SO=\{\Rot\in\mathbb{R}^{3\times3}:\Rot^\top\Rot=\Id, \det(\Rot)=1\}.
\end{equation}

The tangent space of a rotation matrix in $\SO$ is isomorphic to $\mathbb{R}^3$ making $\SO$ an embedded submanifold of the ambient Eucldiean space $\Amb$. Hence, $\SO$ \emph{inherits} the metric or the inner product of its embedding space, $\Amb$. 

Since $\SO$ is also a Lie group, elements of the tangent space $\bphi^\wedge \in \TiM$ can be uniquely mapped to the manifold $\Man$ through the exponential map:
\begin{equation}\label{eq:expso3}
\exp_{\Id}(\bphi^\wedge) = \Id + \bphi^\wedge + \frac{1}{2!}(\bphi^\wedge)^2+\frac{1}{3!}(\bphi^\wedge)^3+... \quad ,
\end{equation}
where $\Id\in\SO$ is the identity matrix and $^\wedge$ is a skew-symmetric operator $^\wedge:\R^3\to\T_\Id\Man$ as
\begin{equation}
\bphi^\wedge=\begin{pmatrix}
    0 & -\phi_z & \phi_y \\ \phi_z & 0 & -\phi_x \\ -\phi_y & \phi_x & 0 
    \end{pmatrix}
\end{equation}

Due to the nature of the Lie group, we can expand the formula in~\cref{eq:expso3} from the tangent space of the identity, $\TiM$, to $\T_{\Rot}\Man$ by simply multiplying by an $\Rot$:

\begin{equation}
 \exp_\Rot(\bphi^\wedge)
 =\Rot\left(\sum_{n=0}^{\infty} (\frac{1}{n!}(\bphi^\wedge)^n)\right)
 \end{equation}

If the vector $\bphi$ is rewritten in terms of a unit vector $\lie$ and a magnitude $\theta$, the exponential map can further be simplified as 
\begin{equation}
    \Exp_\Rot(\bphi) = \Rot(\Id + \sin\theta~\lie^\wedge+(1-\cos\theta)(\lie^\wedge)^2)
\end{equation}

\noindent which is well known as the Rodrigues formula~\cite{rodrigues1840lois}.
Following \cite{taylor1994minimization}, we have 
\small
\begin{equation}
\left.\frac{\partial}{\partial \phi_x}\Exp_\Rot(\bphi)\right|_{\bphi=\mathbf{0}}=\left.\Rot\left(\cos\theta~\frac{\partial\theta}{\partial\phi_x}\lie^\wedge\right)\right|_{\bphi=\mathbf{0}}=\Rot \x^\wedge
\end{equation}
\normalsize
where $\x=(1,0,0)\in\R^3$. For $\phi_y$ and $\phi_z$, there are the similar expressions of the gradient. Finally we can have
\begin{equation}
\grad~\Loss f(\Rot)
=\left(\left.\frac{\partial f(\Rot)}{\partial \Rot}\frac{\partial}{\partial \bphi} \Exp_\Rot(\bphi)\right|_{\bphi=\mathbf{0}}\right)^\wedge    
\end{equation}

\paragraph{Riemannian gradient descent on $\SO$.}
We are now ready to state the Riemannian optimization in the main paper in terms of the exponential map:

\begin{equation}
    \Rot_{k+1}=\Exp_{\Rot_k}(-\tau_k\nabla \bphi).
\end{equation}

Note that if we consider the most commonly used L2 loss $f(\Rot)=\|\Rot-\Rot_{\gt}\|_F^2$ , where 
\scriptsize
  $$ \Rot=\begin{pmatrix}
    a_1 & b_1 & c_1 \\ a_2 & b_2 & c_2 \\ a_3 & b_3 & c_3 
    \end{pmatrix} \in \SO, \text{\quad} \Rot_{\gt}=\begin{pmatrix}
    x_1 & y_1 & z_1 \\ x_2 & y_2 & z_2 \\ x_3 & y_3 & z_3 
    \end{pmatrix} \in \SO,$$
\normalsize
we can get an analytical expression of $\nabla \bphi=(\nabla \phi_x,\nabla \phi_y,\nabla \phi_z)$ as follows:
\scriptsize
\begin{align}
\label{eq:wgrad}
     \nabla\bphi_x &=\frac{\partial f(\Rot)}{\partial \Rot} *\Rot \x^\wedge \notag\\
        &= 2\left|\left|\begin{pmatrix}
        a_1-x_1 & b_1-y_1 & c_1-z_1 \\ a_2-x_2 & b_2-y_2 & c_2-z_2 \\ a_3-x_3 & b_3-y_3 & c_3-z_3 
        \end{pmatrix}
         \begin{pmatrix}
        0 & c_1 & -b_1 \\ 0 & c_2 & -b_2 \\ 0 & c_3 & -b_3
        \end{pmatrix}\right|\right|_1 \notag\\
        &= 2*\sum^3_{i=1}(b_i*z_i-c_i*y_i)
\end{align}    
\normalsize
Similarly, we have $ \nabla\phi_y=2*\sum^3_{i=1}(c_i*x_i-a_i*z_i)$ and $\nabla\phi_z=2*\sum^3_{i=1}(a_i*y_i-b_i*x_i)$.

\paragraph{$\tau_{converge}$ in ablation study.}
We have mentioned in Section \ref{sec:rpmg} that $\tau$ should be small at the beginning of training and be large when converging. This is because a small $\tau$ can yield $\Rot_g$ closer to $\Rot$ and greatly alleviate the reverse problem at the beginning stage of training discussed in Section \ref{sec:rpmg}. Later in training, a large $\tau$ can help us converge better. The initial $\tau$ will not influence the final results too much, and we just need to choose a reasonable value. But the final $\tau$ matters.

Right before convergence, our ideal choice for the final $\tau$ would be $\tau_{\gt}$.
Given that the value of $\tau_{\gt}$ will change according to the geodesic distance between $\Rot$ and $\Rot_{\gt}$, we instead choose to find a suitable constant value to act like $\tau_{\gt}$ when converging, which we denotes as $\tau_{converge}$.

\begin{lemma}
The final value of $\tau_{converge}$ satisfies:
\small
\begin{equation}\Rot_{\gt}=\lim_{<\Rot,\Rot_{\gt}>\to0} R_\Rot(-\tau_{converge}~\grad~\Loss(f(\Rot)))
\end{equation}
\normalsize
where $<\Rot,\Rot_{\gt}>$ represents the angle between $\Rot$ and $\Rot_{\gt}$.
\end{lemma}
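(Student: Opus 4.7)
The plan is to parametrise the problem near $\Rot_{\gt}$ via the $SO(3)$ exponential map and read off $\tau_{converge}$ from the magnitude of the Riemannian gradient in that parametrisation. First I would write $\Rot = \Rot_{\gt}\,\Exp_{\Id}(\theta\,\mathbf{u}^{\wedge})$ for some unit axis $\mathbf{u}\in\R^{3}$ and angle $\theta=\langle\Rot,\Rot_{\gt}\rangle$, so that the unique length-minimising geodesic from $\Rot$ to $\Rot_{\gt}$ is $\gamma(t)=\Rot\,\Exp_{\Id}(-t\,\mathbf{u}^{\wedge})$, parametrised by arc length with $\gamma(0)=\Rot$ and $\gamma(\theta)=\Rot_{\gt}$.

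Next I would evaluate the L2 loss along this geodesic. Using the identity $\|\mathbf{A}-\mathbf{B}\|_{F}^{2}=2(3-\mathrm{tr}(\mathbf{A}^{\top}\mathbf{B}))$ for $SO(3)$ matrices together with $\mathrm{tr}(\Exp_{\Id}(\alpha\,\mathbf{u}^{\wedge}))=1+2\cos\alpha$, I obtain $f(\gamma(t))=4(1-\cos(\theta-t))$, and differentiating at $t=0$ yields $\tfrac{d}{dt}(f\circ\gamma)(0)=-4\sin\theta$. Because the Riemannian gradient is the unique tangent vector whose inner product reproduces every directional derivative, $\grad{f(\Rot)}$ must be tangent to $\gamma$ at $\Rot$, point away from $\Rot_{\gt}$, and have norm $4\sin\theta$. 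This is consistent with the claim in Section \ref{sec:rg} that the Riemannian gradient of L2 loss lies along the geodesic, and with the explicit formula derived in \cref{eq:wgrad}.

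Then I would invoke the retraction $R_{\Rot}$, which the paper takes to be (related to) the exponential map. Moving from $\Rot$ in the direction $-\tau_{converge}\,\grad{f(\Rot)}$ traverses the same geodesic toward $\Rot_{\gt}$ by arc length $4\,\tau_{converge}\sin\theta$. Setting this arc length equal to $\theta$, so that the endpoint coincides with $\Rot_{\gt}$, gives $\tau_{converge}=\theta/(4\sin\theta)$. Taking $\theta=\langle\Rot,\Rot_{\gt}\rangle\to 0$ and using $\lim_{\theta\to 0}\theta/\sin\theta=1$, I would conclude $\tau_{converge}=1/4$, which matches the value $0.25$ used in all of the experiments.

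The main obstacle I anticipate is justifying that the retraction $R_{\Rot}$ may be identified with the true exponential map in the convergence regime; the paper is slightly loose on this and only states that $R$ is ``related to the exponential map''. I would address it either by specialising to $R_{\Rot}=\Exp_{\Rot}$, in which case the arc-length identification is exact, or by noting that any second-order retraction agrees with the exponential map up to $O(\|\v\|^{3})$, so the identification is exact in the limit $\theta\to 0$ that the statement of the lemma actually requires. Apart from this small caveat, the argument reduces to a single trigonometric limit and requires no additional machinery.
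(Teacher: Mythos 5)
Your proof is correct and reaches the same value $\tau_{converge}=1/4$, but by a genuinely different route from the paper. The paper works in coordinates: it fixes $\Rot=\Id$ by left-invariance, substitutes the explicit Riemannian gradient $\grad{\Loss f(\Id)}=2(\Rot_{\gt}^\top-\Rot_{\gt})=-4\sin\theta\,(\lie_{\gt})^\wedge$ and the log $\log_\Id(\Rot_{\gt})=\theta(\lie_{\gt})^\wedge$ into the requirement $\log_\Rot(\Rot_{\gt})=-\tau\,\grad{\Loss f(\Rot)}$, and cancels the common skew factor to obtain $\theta/(4\sin\theta)$. You instead restrict $f$ to the geodesic, exploit the closed form $f(\gamma(t))=4(1-\cos(\theta-t))$, and read off both the gradient magnitude and the required step length from a single scalar trigonometric derivative --- a cleaner, more intrinsic argument that never needs the explicit coordinate expression of the gradient. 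One caveat: your stated justification that $\grad{f(\Rot)}$ ``must be tangent to $\gamma$ because the gradient is the unique tangent vector reproducing every directional derivative'' is not by itself a proof, since knowing one directional derivative does not determine the gradient's direction. The correct reason, which you do lean on implicitly by appealing to the paper's remark in Section 4.1, is that the L2 loss depends only on the geodesic distance to $\Rot_{\gt}$, so its level sets are geodesic spheres and $\grad{f}$ is necessarily radial. With that small gap filled, your limit $\theta/(4\sin\theta)\to 1/4$ is exactly the paper's, and your handling of the retraction-versus-exponential-map issue in the limit $\theta\to 0$ is appropriate and in fact slightly more careful than the paper's own statement.
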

\begin{proof}
Considering the symmetry, without loss of generality, we assume that $\Rot=\Id$, which will simplify the derivation. Based upon the conclusion in~\cref{eq:wgrad}, when we use L2 loss, we have $\nabla \bphi=(2*(z_2-y_3),2*(x_3-z_1),2*(y_1-x_2))$ and $\grad~\Loss f(\Rot)=(\nabla \bphi)^\wedge = 2(\Rot_{\gt}^\top-\Rot_{\gt})$. Taking the manifold logarithm of both sides, we get:
\small
\begin{equation}
\log_\Rot(\Rot_{\gt})=\lim_{<\Rot,\Rot_{\gt}>\to0}-\tau_{converge} ~\grad~\Loss f(\Rot)
\end{equation}
\normalsize
The solution for $\tau_{converge}$ can then be derived as follows:
\small
\begin{align}
\label{eq:tau_safe}
\tau_{converge}&=\lim_{<\Rot,\Rot_{\gt}>\to0}-\frac{\log_\Rot(\Rot_{\gt})}{\grad~\Loss f(\Rot)}
\notag \\ &=\lim_{\theta\to0}-\frac{(\bphi_{\gt})^\wedge}{2(\Rot_{\gt}^\top-\Rot_{\gt})} \notag \\
&=\lim_{\theta\to0}-\frac{(\bphi_{\gt})^\wedge}{2\sin\theta(((\lie_{\gt})^\wedge)^\top-(\lie_{\gt})^\wedge)} \notag \\
&=\lim_{\theta\to0}\frac{\theta}{4\sin\theta} \notag \\
&=\frac{1}{4}
\end{align}
\normalsize
where 
\small 
$(\bphi_{\gt})^\wedge=\log_\Id(\Rot_{\gt})=\theta(\lie_{\gt})^\wedge$, $\theta=<\Id,\Rot_{\gt}>$
\normalsize
\end{proof}

Note that though $\tau_{converge}=\frac{1}{4}$ is only true for the L2 loss, we can solve $\tau_{converge}$ for other frequently used loss formats, \textit{e.g.}, geodesic loss~\cite{peretroukhin_so3_2020}. If we use geodesic loss $\theta^2$, it can be computed that $\tau_{converge}=\frac{1}{2}$. We leave the detailed derivation to the interested readers.

\subsection{Derivations of Inverse Projection}
\label{sec:inverse_proj}
For different rotation representations, we follow the same process to find its inverse projection: we first find the inverse image space $\pi^{-1}(\x_g)$, then project $\x$ to this space resulting in $\x_{gp}$, and finally get our (regularized) projective manifold gradient. 

\paragraph{Quaternion}
We need to solve 
\begin{equation}
    \x_{gp} = \underset{\x_g\in\pi_q^{-1}(\hat{\x}_g)}{\text{argmin}}~\|\x_g-\x\|_2^2,
\end{equation}
where $\x$ is the raw output of our network in \textit{ambient space} $\R^4$, $\hat{\x}_g$ is the next goal in \textit{representation manifold} $\Sphere^3$, and $\x_g$ is the variable to optimize in \textit{ambient space} $\R^4$. Recall $\pi^{-1}_{q}(\hat{\x}_g) = \{\x~|~\x = k\hat{\x}_g, k\in \mathbb{R}~\text{and}~k>0\}$, and we can have
\begin{equation}
\|\x-\x_g\|_2^2=\x^2-2k\x\cdot\hat{\x}_g+k^2\hat{\x}_g^2
\end{equation}
Without considering the condition of $k>0$, We can see when $k=\frac{\x\cdot\hat{\x}_g}{\hat{\x}_g^2}=\x\cdot\hat{\x}_g$ the target formula reaches minimum. Note that when using a small $\tau$, the angle between $\hat{\x}_g$ and $\x$ is always very small, which means the condition of $k=\x\cdot\hat{\x}_g>0$ can be satisfied naturally. For the sake of simplicity and consistency of gradient, we ignore the limitation of $k$ no matter what value $\tau$ takes. Therefore, the inverse projection is $\x_{gp}=(\x\cdot\hat{\x}_g)\hat{\x}_g$.

\paragraph{6D representation}
We need to solve 
\scriptsize
\begin{equation}
    [\u_{gp},\v_{gp}] = \underset{[\u_g,\v_g]\in\pi_{6D}^{-1}([\hat{\u}_g,\hat{\v}_g])}{\text{argmin}}~(\|\u_g-\u\|_2^2+\|\v_g-\v\|_2^2)
\end{equation}
\normalsize
where $[\u,\v]$ is the raw output of network in \textit{ambient space} $\R^6$, $[\hat{\u}_g,\hat{\v}_g]$ is the next goal in \textit{representation manifold} $\mathcal{V}_2(\R^3)$ and $[\u_g,\v_g]$ is the variable to optimize in \textit{ambient space} $\R^6$. Recall $\pi^{-1}_{6D}([\hat{\u}_g,\hat{\v}_g]) = \{[k_1\hat{\u}_g, k_2\hat{\u}_g+k_3\hat{\v}_g]~|~k_1,k_2,k_3 \in \R~\text{and}~k_1, k_3>0\}$. We can see that $\u_g$ and $\v_g$ are independent, and $\u_g$ is similar to the situation of quaternion. So we only need to consider the part of $\v_g$ as below:
\begin{equation}
\|\v-\v_g\|_2^2=\v^2+k_2^2\hat{\u}_g^2+k_3^2\hat{\v}_g^2-2k_2\v\cdot\hat{\u}_g-2k_3\v\cdot\hat{\v}_g
\end{equation}
For the similar reason as quaternion, we ignore the condition of $k_3>0$ and we can see when $k_2=\v\cdot\hat{\u}_g$ and $k_3=\v\cdot\hat{\v}_g$, the target formula reaches minimum. Therefore, the inverse projection is $[\u_{gp},\v_{gp}]=[(\u\cdot\hat{\u}_g)\hat{\u}_g, (\v\cdot\hat{\u}_g)\hat{\u}_g+(\v\cdot\hat{\v}_g)\hat{\v}_g]$

\paragraph{9D representation}
For this representation, obtaining the inverse image $\pi_{9D}^{-1}$ is not so obvious. Recall $\pi_{9D}(\x)=\U\Sigma'\V^\top$, where $\U$ and $\V$ are left and right singular vectors of $\x$ decomposed by SVD expressed as $\x=\U\Sigma \V^\top$, and $\Sigma'=\mathrm{diag}(1,1,\det(\U\V^\top))$.

\begin{lemma}
The inverse image $\pi^{-1}_{9D}(\Rot_g) = \{\mathbf{S}\Rot_g~|~\mathbf{S}=\mathbf{S}^\top\}$ satisfies that $\{\x_g~|~\pi_{9D}(\x_g)=\Rot_g\}\subset \pi^{-1}_{9D}(\Rot_g)$.
\end{lemma}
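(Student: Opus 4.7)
The plan is to show the inclusion directly: take an arbitrary $\x_g$ satisfying $\pi_{9D}(\x_g) = \Rot_g$ and exhibit an explicit symmetric matrix $\mathbf{S}$ for which $\x_g = \mathbf{S}\Rot_g$. The natural candidate is $\mathbf{S} := \x_g \Rot_g^\top$, so the whole argument reduces to verifying that this matrix is symmetric.

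First I would unpack the definition of $\pi_{9D}$. Writing a (not necessarily unique) SVD as $\x_g = \U\Sigma\V^\top$ with $\Sigma = \mathrm{diag}(\sigma_1,\sigma_2,\sigma_3)$, the hypothesis $\pi_{9D}(\x_g) = \Rot_g$ means
\begin{equation}
\Rot_g \;=\; \U\Sigma'\V^\top, \qquad \Sigma' \;=\; \mathrm{diag}\bigl(1,1,\det(\U\V^\top)\bigr).
\end{equation}
Because $\Rot_g \in \SO$, we have $\Rot_g^\top = \Rot_g^{-1} = \V\Sigma'^{-1}\U^\top$ (the inversion is well defined since the diagonal entries of $\Sigma'$ are $\pm 1$, in particular $\Sigma'^{-1} = \Sigma'$).

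Next I would compute $\mathbf{S}$. Substituting the SVD and the expression for $\Rot_g^\top$:
\begin{equation}
\mathbf{S} \;=\; \x_g \Rot_g^\top \;=\; \U\Sigma\V^\top \V\Sigma'^{-1}\U^\top \;=\; \U \bigl(\Sigma\Sigma'^{-1}\bigr) \U^\top.
\end{equation}
Since $\Sigma$ and $\Sigma'^{-1}$ are both diagonal, the middle factor $\Sigma\Sigma'^{-1}$ is diagonal, hence symmetric. Conjugation of a symmetric matrix by the orthogonal $\U$ preserves symmetry, so $\mathbf{S}^\top = \mathbf{S}$. Finally, right-multiplying the definition $\mathbf{S} = \x_g\Rot_g^\top$ by $\Rot_g$ gives $\x_g = \mathbf{S}\Rot_g$, placing $\x_g$ in the claimed set.

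The only subtlety worth flagging is the non-uniqueness of the SVD when $\x_g$ has repeated or zero singular values; however, the argument only uses the \emph{existence} of one factorization for which $\pi_{9D}(\x_g) = \U\Sigma'\V^\top$ holds, so this does not obstruct the proof. I do not expect a substantive obstacle: once the candidate $\mathbf{S} = \x_g\Rot_g^\top$ is written down, symmetry follows from a one-line calculation, and the lemma asserts only an inclusion (not equality), so we need not worry about which symmetric matrices $\mathbf{S}$ actually yield $\pi_{9D}(\mathbf{S}\Rot_g) = \Rot_g$ (that is exactly the positivity-of-singular-values caveat acknowledged in the main text).
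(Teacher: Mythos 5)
Your proof is correct and uses essentially the same argument as the paper: both express $\x_g = \U\Sigma\V^\top$ and $\Rot_g = \U\Sigma'\V^\top$, substitute to obtain $\x_g\Rot_g^\top = \U(\Sigma\Sigma'^{-1})\U^\top$, and conclude symmetry from the fact that conjugating a diagonal matrix by an orthogonal one preserves symmetry. The only difference is stylistic: you present $\mathbf{S} = \x_g\Rot_g^\top$ as a candidate and verify it, whereas the paper derives the same expression by eliminating $\V_g^\top$; the computation is identical.
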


\begin{proof}
To find a suitable $\pi_{9D}^{-1}$, the most straightforward way is to only change the singular values $\Sigma_g=\mathrm{diag}(\lambda_0,\lambda_1,\lambda_2)$, where $\lambda_0,\lambda_1,\lambda_2$ can be arbitrary scalars, and recompose the $\x_g=\U\Sigma_g \V^\top$. 

However, we argue that this simple method will fail to capture the entire set of $\{\x_g~|~\pi_{9D}(\x_g)=\Rot_g\}$, because different $\U'$ and $\V'$ can yield the same rotation $\Rot_g$. In fact, $\U_g$ can be arbitrary if $\x_g=\U_g\Sigma_g \V_g^\top$ and $\U_g\Sigma_g' \V_g^\top=\Rot_g$. Assuming $\Rot_g$ is known, we can replace $\V_g^\top$ by $\Rot_g$ and express $\x_g$ in a different way: $\x_g=\U_g\Sigma_g\frac{1}{\Sigma_g'}\U_g^{-1}\Rot_g$. Notice that $\U_g\Sigma_g\frac{1}{\Sigma_g'}\U_g^{-1}$ must be a symmetry matrix since $\U_g$ is an orthogonal matrix. Therefore,  $\{\x_g~|~\pi_{9D}(\x_g)=\Rot_g\}\subseteq \pi^{-1}_{9D}(\Rot_g) = \{\mathbf{S}\Rot_g~|~\mathbf{S}=\mathbf{S}^\top\}$.

Note that such $\x_g\in\pi^{-1}_{9D}(\Rot_g)$ can't ensure $\pi_{9D}(\x_g)=\Rot_g$, because in the implementation of SVD, the order and the sign of three singular values are constrained, which is not taken into consideration. Therefore, $\{\x_g~|~\pi_{9D}(\x_g)=\Rot_g\}\neq \pi^{-1}_{9D}(\Rot_g)$.
\end{proof}

Then we need to solve
\begin{equation}
    \x_{gp} = \underset{\x_g\in\pi_{9D}^{-1}(\Rot_g)}{\text{argmin}}~\|\x_g-\x\|_2^2
\end{equation}
where $\x$ is the raw output of our network in \textit{ambient space} $\R^{3\times3}$, $\hat{\x}_g$ is the next goal in \textit{representation manifold} $\SO$, and $\x_g$ is the variable to optimize in \textit{ambient space} $\R^{3\times3}$.
We can further transform the objective function as below: 
\begin{equation}
    \|\x_g-\x\|_2^2 =\|\mathbf{S}\Rot_g-\x\|_2^2
    =\|\mathbf{S}-\x\Rot_g^\top\|_2^2
\end{equation}
Now we can easily find 
when $\mathbf{S}$ equals to the symmetry part of $\x\Rot_g^\top$, the target formula reaches minimum. Therefore, the inverse projection admits a simple form $\x_{gp}=\frac{\x\Rot_g^\top+\Rot_g \x^\top}{2}\Rot_g$. 
\paragraph{10D representation} Recall the \textit{manifold mapping} $\pi_{10D}: \R^{10}~\rightarrow~\Sphere^3, \pi_{10D}(\x)=\underset{\q\in\Sphere^3}{\min}~\q^\top\mathbf{A}(
\x)\q$, in which
\begin{equation}\label{eq:10d}
\vspace{1mm}
    \mathbf{A}(\boldsymbol{\theta})~=~
        \begin{pmatrix}
            \theta_1 & \theta_2 & \theta_3 & \theta_4 \\
            \theta_2 & \theta_5 & \theta_6 & \theta_7 \\
            \theta_3 & \theta_6 & \theta_8 & \theta_9 \\
            \theta_4 & \theta_7 & \theta_9 & \theta_{10} \\
         \end{pmatrix}.
    \vspace{1mm}
\end{equation}

We need to solve
\begin{equation}
    \x_{gp} = \underset{\mathbf{A}(\x_g)\q_g=\lambda\q_g}{\arg\min}~\|\x_g-\x\|_2^2,
\end{equation}
where $\x$ is the raw output of our network in \textit{ambient space} $\R^{10}$, $\q_g$ is the next goal in \textit{representation manifold} $\Sphere^3$, and $\x_g$ is the variable to optimize in \textit{ambient space} $\R^{10}$. Note that $\lambda$ is also a variable to optimize. For the similar reason as before, for the sake of simplicity and consistency of analytical solution, here we also need to relax the constraint that $\lambda$ should be the smallest eigenvalue of $\mathbf{A}(\x_g)$.

To solve Eq. \ref{eq:10d}, we start from rewriting  $\mathbf{A}(\x_g)\q_g=\lambda\q_g$ as 
\begin{equation}
    \M\Delta\x=\lambda\q_g-\mathbf{A}(\x)\q_g,
\end{equation}
where $\Delta\x=\x_g-\x$ and 
\scriptsize
\begin{equation}
    \M=\begin{pmatrix}
        q_1&q_2&q_3&q_4&0&0&0&0&0&0\\
        0&q_1&0&0&q_2&q_3&q_4&0&0&0\\
        0&0&q_1&0&0&q_2&0&q_3&q_4&0\\
        0&0&0&q_1&0&0&q_2&0&q_3&q_4\\
    \end{pmatrix}
\end{equation} 
\normalsize
where $\q_g=(q_1,q_2,q_3,q_4)^\top$.
For simplicity, we denote $\lambda\q_g-\mathbf{A}(\x)\q_g$ as $\bb$. 

Once we have finished the above steps for preparation, we solve $\lambda$ and $\Delta\x$ for the minimal problem by two steps as below. First, we assume $\lambda$ is known and the problem becomes that given $\M$ and $\bb$, we need to find the best $\Delta\x$ to minimize $\|\Delta\x\|^2_2$ with the constraint $\M\Delta\x=\bb$. This is a typical quadratic optimization problem with linear equality constraints, and the analytical solution satisfies

\begin{equation}
    \begin{pmatrix}\label{eq:kkt}
         \mathbf{I} & \M^\top \\
         \M  & \mathbf{0}
    \end{pmatrix}~
    \begin{pmatrix}
         \Delta\x \\
         \mathbf{v}
    \end{pmatrix} = \begin{pmatrix}
         \mathbf{0} \\
        \mathbf{b}
    \end{pmatrix}
\end{equation}

\noindent where $\mathbf{v}$ is a set of Lagrange multipliers which come out of the solution alongside $\Delta\x$, and $\begin{pmatrix}\Id & \M^\top \\ \M & \mathbf{0}\end{pmatrix}$ is called KKT matrix. Since this matrix has full rank almost everywhere, we can multiple the inverse of this KKT matrix in both sides of Eq. \ref{eq:kkt} and lead to the solution of $\Delta\x$ as below:
\begin{equation}
   \begin{pmatrix}
         \Delta\x \\
         \mathbf{v}
    \end{pmatrix} = \begin{pmatrix}
         \mathbf{I} & \M^\top \\
         \M  & \mathbf{0}
    \end{pmatrix}^{-1}~\begin{pmatrix}
         \mathbf{0} \\
        \mathbf{b}
    \end{pmatrix}
\end{equation}
\noindent Recall that $\bb=\lambda\q_g-\mathbf{A}(\x)\q_g$, therefore so far we have had the solution of $\Delta\x$ respect to each $\lambda$:
\begin{equation}\label{eq:kkt_solution}
    \Delta\x=\begin{pmatrix}
         \Delta\x \\
         \mathbf{v}
    \end{pmatrix}_{0:10} = \mathbf{K} (\lambda\q_g-\mathbf{A}(\x)\q_g)=\lambda\mathbf{S}-\mathbf{T}
\end{equation}
in which $\mathbf{K}$ is the upper right part of the inverse of the KKT matrix $\mathbf{K}=\left[\begin{pmatrix}\Id & \M^\top \\ \M & \mathbf{0}\end{pmatrix}^{-1}\right]_{10:14, 0:10}$, $\mathbf{S}=\mathbf{K}\mathbf{q}_g$ and $\mathbf{T}=\mathbf{K}\mathbf{A}(\x)\mathbf{q}_g$.

Next, we need to optimize $\lambda$ to minimize our objective function $\|\Delta\x\|^2_2$. In fact, using the results of Eq. \ref{eq:kkt_solution}, $\|\Delta\x\|^2_2$ becomes a quadratic functions on $\lambda$, thus we can simply get the final analytical solution of $\lambda$ and $\x_{gp}$:
\begin{equation}
    \left\{
    \begin{array}{l}
    \lambda= \frac{(\mathbf{S}^\top\mathbf{T}+\mathbf{T}^\top\mathbf{S})}{2\mathbf{S}^\top\mathbf{S}}\\
    \x_{gp} = \x + \lambda\mathbf{S} - \mathbf{T}
    \end{array}
    \right.
\end{equation}

Another thing worth mentioning here is that in this special case, the \textit{representation manifold} $\Sphere^3$ is no longer a subspace of the \textit{abmient space} $\R^{10}$, which means that we can't directly compute our regularization term $\x_{gp}-\q_g$ because $\x_{gp}\in\R^{10}$ while $\q_g\in\Sphere^3$. However, the length vanishing problem still exists as shown in Figure \ref{fig:length}. Therefore, to compute the regularization term, we need a simple mapping to convert $\q_g$ to an element on $\R^{10}$ with stable length norm. We use the mapping $g:\Sphere^3\rightarrow\R^{10}, g(\q)=\mathbf{A}^{-1}(\mathbf{I}-\q\q^\top)$, which is proposed in \cite{peretroukhin_so3_2020}. They also proved that $\pi(g(\q))=\q$ is always true, which makes $g(\q)$ better than simply normalizing $\x_{gp}$ because the latter one will suffer from the problem of opposite gradient discussed in Section \ref{sec:rpmg}.

\begin{table*}[t]
    \caption{\textbf{Pose estimation from PASCAL3D+ \textit{sofa} images.} Left: a comparison of methods by 10$^\circ$ / 15$^\circ$ / 20$^\circ$ accuracy of (geodesic) errors and median errors after 60k training steps. Middle: median test error at different iterations during training. Right: test error percentiles after training completes. The legend on the right applies to both plots.}

    \begin{minipage}[h]{0.28\columnwidth}
        \resizebox{\columnwidth}{!}{
            \begin{tabular}{lcccc}
                \multirow{2}{*}{Methods}&\multicolumn{3}{c}{ Accuracy$(\%)\uparrow$}&Med$(^\circ)\downarrow$\\
                & 10$^\circ$ & 15$^\circ$  & 20$^\circ$ &  Err \\
                \cmidrule{1-5}  
                Euler       &       60.2& 80.9&90.6&8.3   \\
                Axis-Angle   &      45.0 & 70.9 & 85.1&11.0   \\
                Quaternion   &      34.3&60.8 & 73.5&13.2   \\
                6D          &       50.8& 76.7 & 89.0 & 9.9   \\
                9D          &       52.4 & 79.6 & 90.3&9.2  \\
                9D-Inf      &       70.9 & \textbf{88.0} & 93.5&\textbf{6.7}  \\
                10D         &       50.2&77.0&89.6&9.8   \\
                \midrule
                RPMG-Quat     &     56.6&79.6&90.9&8.9 \\
                RPMG-6D       &     69.6 & 86.1 &92.2&\textbf{6.7}  \\
                RPMG-9D       &     \textbf{72.5} &\textbf{88.0} &
                \textbf{95.8}&\textbf{6.7}  \\
                RPMG-10D       &    69.3 & 87.1 & 93.9 & 7.0\\
            \end{tabular}}
        \end{minipage}
    \begin{minipage}[h]{0.28\columnwidth}
        \includegraphics[width=\columnwidth]{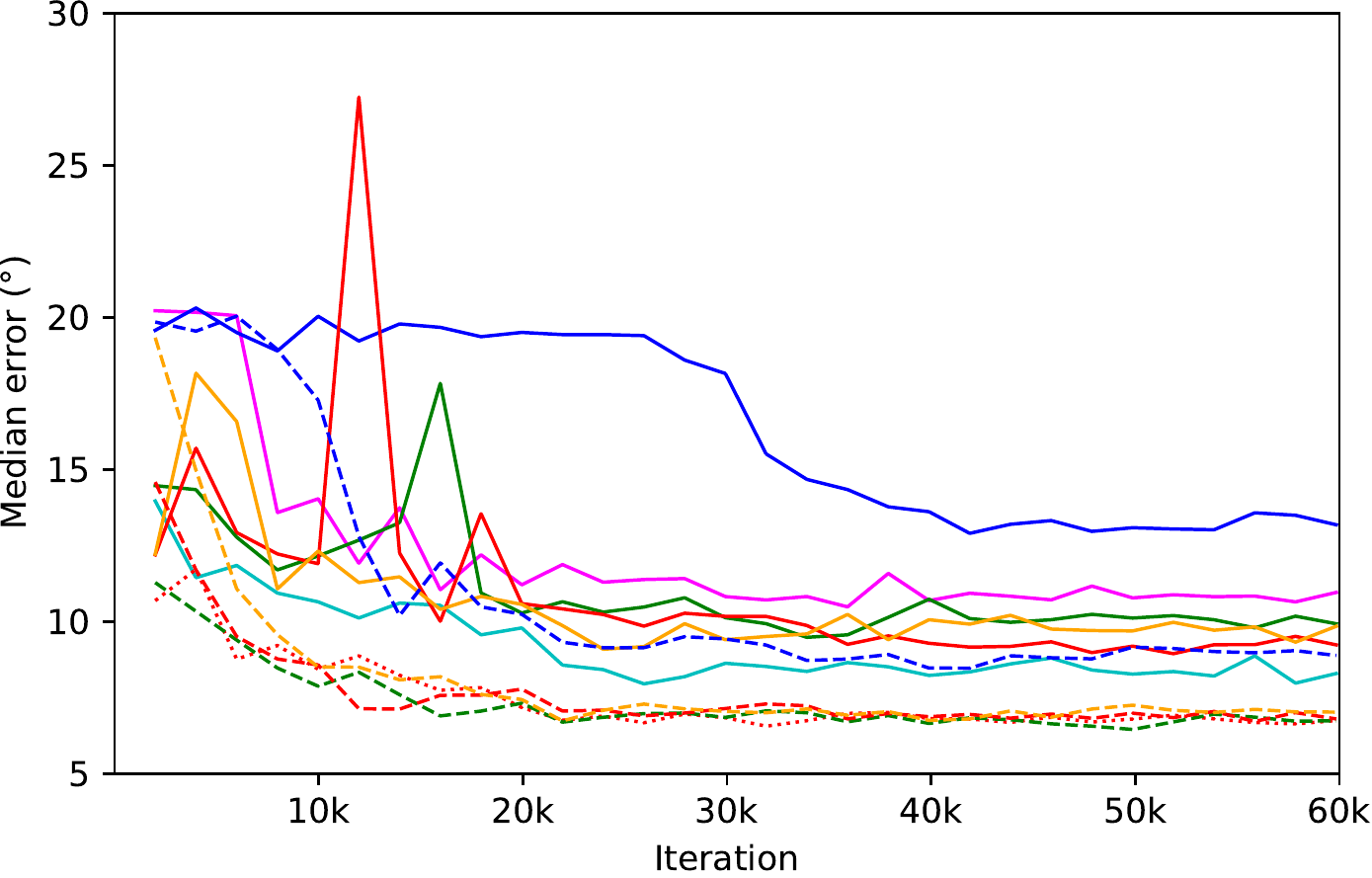}
    \end{minipage}
    \begin{minipage}[h]{0.28\columnwidth}
        \includegraphics[width=\columnwidth]{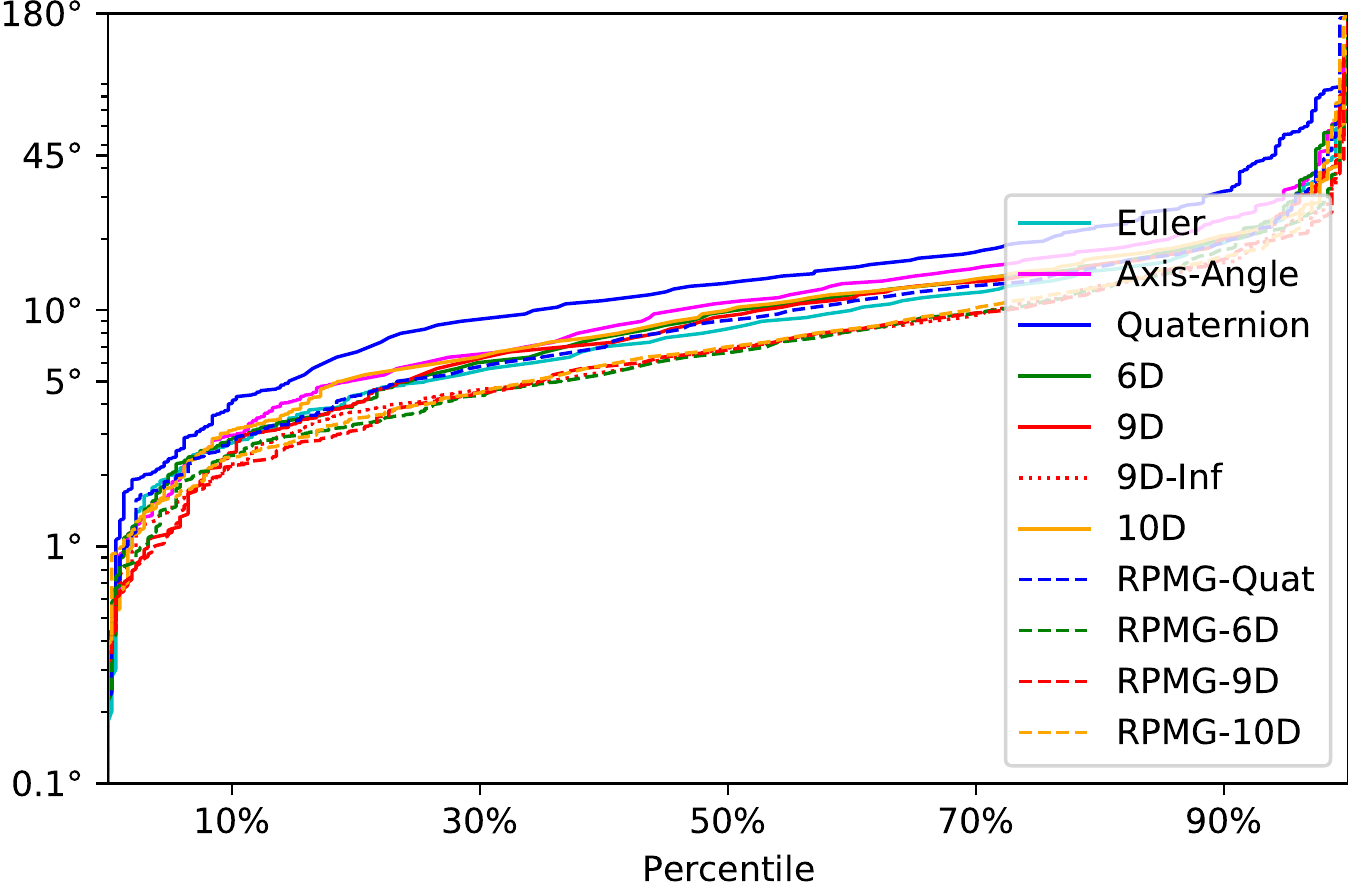}
    \end{minipage}
  \label{tab:pascal3d_sofa}
   \vspace{-1mm}
\end{table*}

\begin{table*}[t]
    \caption{\textbf{Pose estimation from PASCAL3D+ \textit{bicycle} images.} We report the same metrics as Table \ref{tab:pascal3d_sofa}; see the caption there.}
    \begin{minipage}[h]{0.28\columnwidth}
        \resizebox{\columnwidth}{!}{
            \begin{tabular}{lcccc}
              \multirow{2}{*}{Methods} &\multicolumn{3}{c}{ Accuracy$(\%)\uparrow$} &Med$(^\circ)\downarrow$\\
                & 10$^\circ$ & 15$^\circ$  & 20$^\circ$ &  Err \\
                \cmidrule{1-5}  
                Euler       &       28.2& 48.1& 62.7& 15.7   \\
                Axis-Angle   &      5.3 & 8.1 & 10.1&79.7   \\
                Quaternion   &      20.8& 38.8 & 54.6&18.7   \\
                6D          &       21.8& 39.0 & 55.3 & 18.1   \\
                9D          &       20.6 & 37.6 & 56.9&18.0  \\
                9D-Inf      &       38.0 & 53.3 & 69.9&13.4  \\
                10D         &       23.9&42.3&56.7&17.9   \\
                \midrule
                RPMG-Quat     &     32.3&50.0&65.6&15.0 \\
                RPMG-6D       &     35.4 & 57.2 &70.6&13.5  \\
                RPMG-9D       &     36.8&57.4 & \textbf{71.8}&\textbf{12.5}  \\
                RPMG-10D       &    \textbf{40.0} & \textbf{57.7} & 71.3 & 12.9 \\
            \end{tabular}}
        \end{minipage}
    \begin{minipage}[h]{0.28\columnwidth}
        \includegraphics[width=\columnwidth]{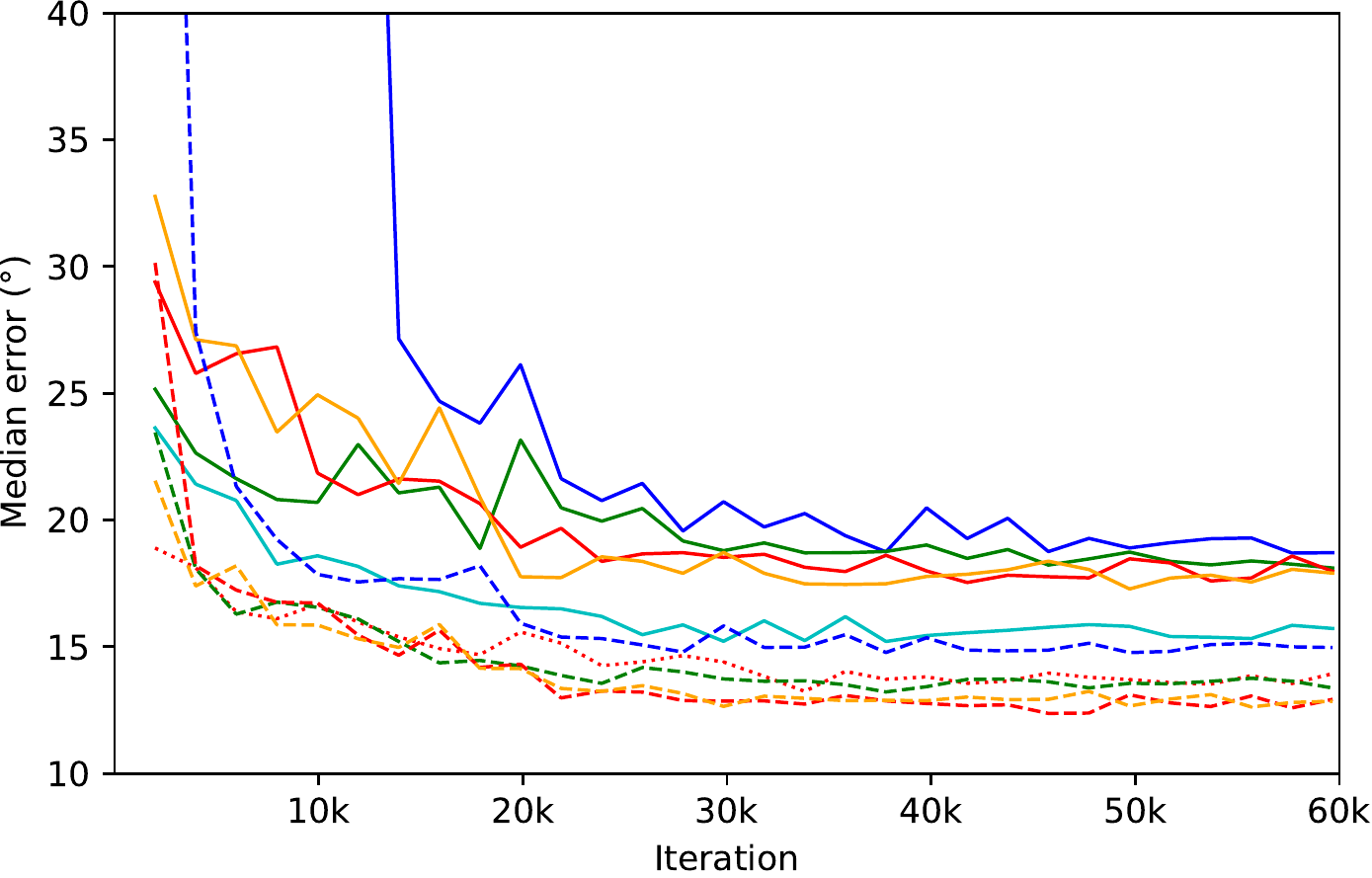}
    \end{minipage}
    \begin{minipage}[h]{0.28\columnwidth}
        \includegraphics[width=\columnwidth]{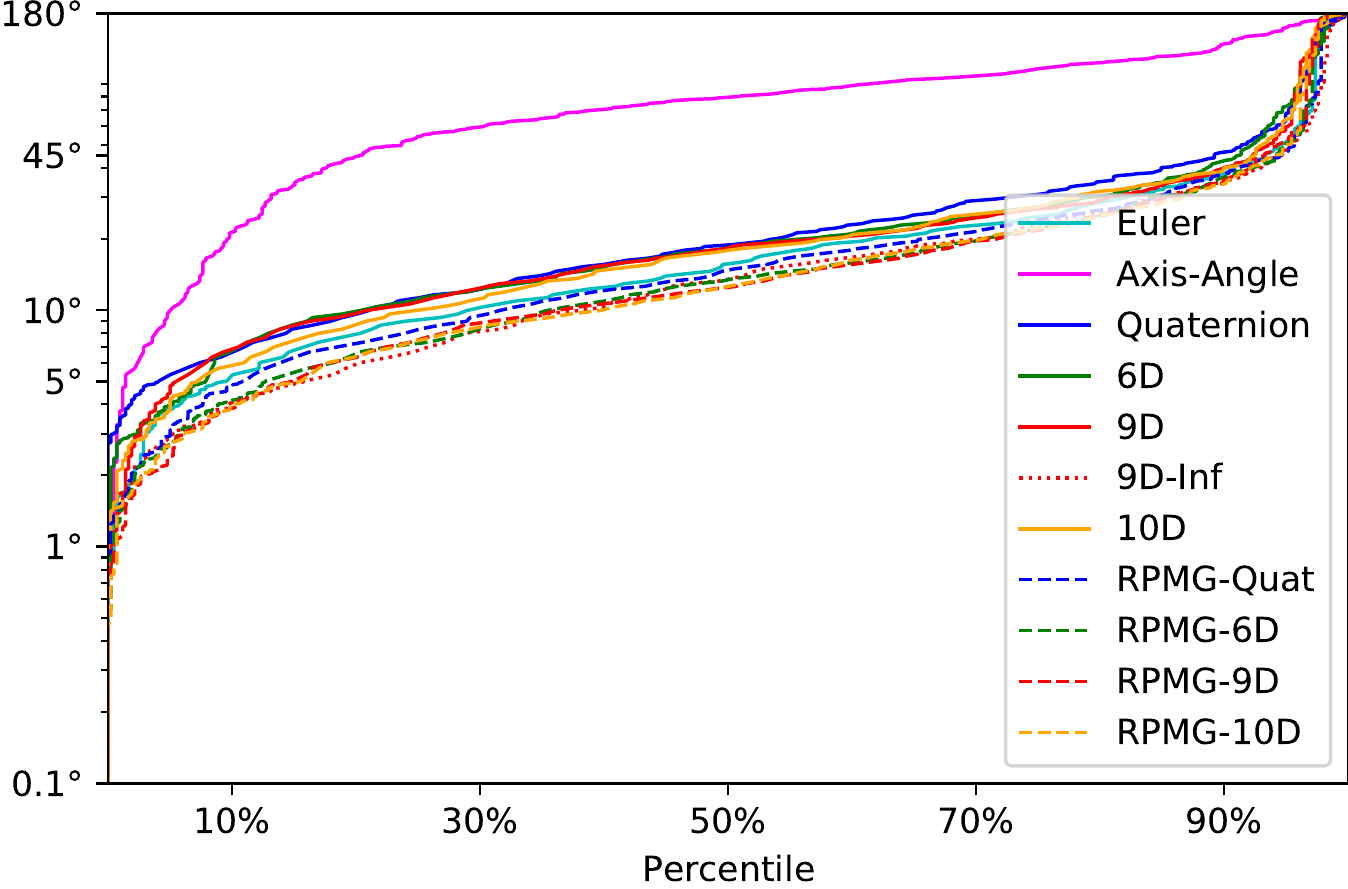}
    \end{minipage}
  \label{tab:pascal3d_bicycle}
  \vspace{-1mm}
\end{table*}

\section{Projective Manifold Gradient on $S^2$}
\label{sec:supp3}
\subsection{Riemannian Optimization on $S^2$}
Our methods can also be applied for the regression of other manifolds. Taking $\Sphere^2$ as an example, which is included in Experiment \ref{sec:other manifolds}, we will show the detail of how our projective manifold gradient layer works in other manifolds.

During forward, The network predicts a raw output $\x\in\R^3$, which is then mapped to $\hat{\x}\in\Sphere^2$ through a \textit{manifold mapping} $\pi(\x)=\x/\|\x\|$. Here we don't define the \textit{rotation mapping} and \textit{representation mapping}, and we directly compute the loss function on \textit{representation manifold} $\Sphere^2$.

During backward, to apply a Riemannian optimization, we first need to know some basic concepts of $\Sphere^2$. The tangent space of an arbitrary element $\hat{\x}\in\Sphere^2$ is $\T_{\hat{\x}}\Man$, which is a plane. And we can map a geodesic path $\v\in\T_{\hat{\x}}\Man$ to an element on the manifold $\Sphere^2$ through $\exp_{\hat{\x}}(\v)=\cos(\|\v\|)\hat{\x}+\sin(\|\v\|)\frac{\v}{\|\v\|}$, where $\|.\|$ means the ordinal Frobenius norm. 

For the definition of the mapping $^\wedge$, which connects Euclidean space $\R^2$ and the tangent space $\T_{\hat{\x}}\Man$, we need to first define two orthogonal axes $\hat{\mathbf{c}}_1$, $\hat{\mathbf{c}}_2$ in the tangent plane. Note that the choice of $\hat{\mathbf{c}}_1$ and $\hat{\mathbf{c}}_2$ won't influence the final result, which will be shown soon after. To simplify the derivation, we can assume ground truth unit vector $\hat{\x}_{gt}$ is known and choose $\hat{\mathbf{c}}_1=\frac{\Log_{\hat{\x}}(\hat{\x}_{gt})}{\|\Log_{\hat{\x}}(\hat{\x}_{gt})\|}=\frac{\hat{\x}_{gt}-(\hat{\x}_{gt}\cdot\hat{\x})}{\|\hat{\x}_{gt}-(\hat{\x}_{gt}\cdot\hat{\x})\|}$ and $\hat{\mathbf{c}}_2=\hat{\x}\times\hat{\mathbf{c}}_1$. Then we can say $\bphi^\wedge=\phi_1\hat{\mathbf{c}}_1+\phi_2\hat{\mathbf{c}}_2$, where $\bphi=(\phi_1, \phi_2)\in\R^2$. The gradient of exponential mapping with respect to $\bphi$ is
\small
\begin{align}
    &\left.\frac{\partial}{\partial \phi_1}\Exp_{\hat{\x}}(\bphi)\right|_{\bphi=\mathbf{0}} \notag \\
    =&\left.\frac{\partial}{\partial \phi_1}(\cos(\|\phi_1\hat{\mathbf{c}}_1\|)\hat{\x}+ 
    \sin(\|\phi_1\hat{\mathbf{c}}_1\|)\frac{\phi_1\hat{\mathbf{c}}_1}{\|\phi_1\hat{\mathbf{c}}_1\|})\right|_{\bphi=\mathbf{0}} \notag \\ 
    =&~\hat{\mathbf{c}}_1
\end{align}
\normalsize
Similarly, we have $\left.\frac{\partial}{\partial \phi_2}\Exp_{\hat{\x}}(\bphi)\right|_{\bphi=\mathbf{0}}=\hat{\mathbf{c}}_2$.

When using L2 loss, we can have 
\small
\vspace{-1mm}
\begin{align}
\grad~\Loss f(\hat{\x})&=(\nabla f(\hat{\x}))^\wedge=(\nabla \bphi)^\wedge\notag\\
&=\left(\left.\frac{\partial f(\hat{\x})}{\partial \hat{\x}}\frac{\partial}{\partial \bphi} \Exp_{\hat{\x}}(\bphi)\right|_{\bphi=\mathbf{0}}\right)^\wedge \notag\\
&=((2(\hat{\x}-\hat{\x}_{gt})\hat{\mathbf{c}}_1,2(\hat{\x}-\hat{\x}_{gt})\hat{\mathbf{c}}_2))^\wedge \notag\\
&=2((\hat{\x}\cdot\hat{\x}_{gt})\hat{\x}-\hat{\x}_{gt})
\end{align}
\normalsize
Note that this expression doesn't depend on the choice of $\hat{\mathbf{c}}_1$ and $\hat{\mathbf{c}}_2$.

Similar to Eq \ref{eq:tau_safe}, we can also solve a $\tau_{converge}$
\scriptsize
\begin{equation}
\vspace{-1mm}
\tau_{converge}=\lim_{<\hat{\x},\hat{\x}_{\gt}>\to0}-\frac{\Log_{\hat{\x}}(\hat{\x}_{\gt})}{\grad~\Loss f(\hat{\x})}=\lim_{\theta\to0}\frac{\theta\hat{\mathbf{c}}_1}{2\sin\theta\hat{\mathbf{c}}_1} =\frac{1}{2}
\end{equation}
\normalsize
where $\theta=<\hat{\x}, \hat{\x}_{gt}>$. Note that in Experiment 5.4, we change the schedule of $\tau$ according to this conclusion. We increase $\tau$ from $0.1$ to $0.5$ by uniform steps.

\subsection{Inverse Projection}

Similar to quaternion, we can have $\x_{gp}=(\x\cdot\hat{\x}_g)\hat{\x}_g$. For the detail of derivation, see Section \ref{sec:inverse_proj}.

\section{Computational Cost}
\label{sec:cost}
Our method does not alter the forward pass and thus incurs no cost at test time. For backward pass at training, we observe that, before and after inserting RMPG layers, the backward time for quaternion / 6D / 9D / 10D representations, averaged among 1K iterations on a GeForce RTX 3090, changes from 4.39 / 4.48 / 4.48 / 4.53  to 4.45 / 4.43 / 4.49 / 4.63 (unit: $10^{-2}$ s), and the memory cost changes from 11449 / 11415 / 10781 / 11363 to 11457 / 11459 / 11545 / 11447 (unit: MiB). Note that the runtime is almost keep the same, as Riemannian optimization only performs an additional projection and we derive and always use analytical solutions in representation mapping, inversion and projection steps. RPMG also has a very marginal cost on the memory, as it does not introduce any weights but only a few intermediate variables.

\section{More Experiments}

\label{sec:more_exps}

\subsection{Pascal3D+}
\label{pascal3d+}
Pascal3D+ \cite{xiang2014pascal3d} is a standard benchmark for object pose estimation from real images. 
We follow the same setting as in \cite{levinson2020analysis} to estimate object poses from single images. For training we discard occluded or truncated objects and augment with rendered images from \cite{renderforcnn15}. In the Table \ref{tab:pascal3d_sofa} and Table \ref{tab:pascal3d_bicycle}, we report our results on \textit{sofa} and \textit{bicycle} categories.
We use the same batch size as in \cite{levinson2020analysis}. As for the learning rate, we use the same strategy as in Experiment \ref{sec:image}. See the discussion in Section \ref{sec:img_lr}.

It can be seen that our method leads to consistent improvements to quaternion, 6D, 9D and 10D representations on both \textit{sofa} and \textit{bicycle} classes. One may be curious about why our method can only outperform 9D-inf for a margin. We think that this is because this dataset is quite challenging. The number of annotated real image for training is only around 200 for each category. Though there are a lot of synthetic images generated from \cite{renderforcnn15} for training, these images suffer from sim-to-real domain gap. Therefore, we argue that the bottleneck here is not in optimization, which makes the gains from less noise in gradient smaller(Note that 9D-inf is just a special case of our methods with $\lambda=1$ and  $\tau=\tau_{gt}$). But compared to vanilla 4D/6D/9D/10D representation, our methods can still bring a great improvement.

\subsection{Using Flow Loss for Rotation Estimation from Point Clouds.}
\label{sec:flow loss}
Apart from the most widely used L2 loss, our method can also be applied to the loss of other forms, e.g. flow loss.

We mainly follow the setting in Experiment \ref{sec:pc_rotation} with \textit{airplane} point clouds dataset and the only difference is that we use flow loss $\|\mathbf{R}X-\mathbf{R}_{gt}X\|_F^2$ here, where $X$ is the complete point clouds. 

Since the format of loss is changed, the previous schedule of $\tau$ is not suitable anymore, and we have to change the value of $\tau$ accordingly. Our selection skill is to first choose a $\tau$ as we like and visualize the mean geodesic distance between predicted $\Rot$ and $\Rot_g$ during training. Then we can roughly adjust $\tau$ to make the geodesic distance looked reasonable. For this experiment, we use $\tau=50$ and $\lambda=0.01$. 
In Table \ref{tab:flow-loss}, we show our methods again outperform vanilla methods as well as \textbf{9D-inf}.

\begin{table}[htbp]
    \centering
    \caption{\textbf{Flow Loss for Rotation Estimation from Point Clouds.}  All models are trained for 30K iterations. }
     \resizebox{0.8\columnwidth}{!}{
    \begin{tabular}{lccc}
               
               Methods & \multicolumn{1}{c}{Mean ($^\circ$)} & \multicolumn{1}{c}{Med ($^\circ$)} & \multicolumn{1}{c}{5$^\circ$Acc ($\%$)} \\
                \cmidrule{1-4}  
                Euler       &       12.14 & 6.91 & 33.6 \\
                Axis-Angle   &      35.49 & 20.80 & 4.7  \\
                Quaternion   &      11.54 & 7.67 & 29.8 \\
                6D          &       14.13 & 9.41 & 23.4   \\
                9D          &       11.44 & 8.01 & 23.8  \\
                9D-Inf      &       4.07 & 3.28 & 76.7 \\
                10D         &       9.28 & 7.05 & 32.6  \\
                \midrule
                RPMG-Quat     &     4.86 & 3.25 & 75.8   \\
                RPMG-6D       &     \textbf{2.71} & \textbf{2.04} & \textbf{92.1} \\
                RPMG-9D       &     3.75 & 2.10 & 91.1 \\
                RPMG-10D & 3.30 & 2.70 & 86.8 \\
            \end{tabular}}
    \label{tab:flow-loss}
     \vspace{-3mm}
\end{table}

\begin{table*}[htbp]
    \centering
    \caption{\textbf{Camera relocalization on Cambridge Landscape dataset.} We report the \textit{median} error of translation and rotation of the best checkpoint, which is chosen by minimizing the median of rotation. We only care about the rotation error here.}
     \begin{tabular}{l|cc|cc|cc|cc|cc}
    \multirow{2}{*}{Methods}& \multicolumn{2}{c|}{King's College} & \multicolumn{2}{c|}{Old Hospital}& \multicolumn{2}{c|}{Shop Facade}&
            \multicolumn{2}{c|}{St Mary's Church} & \multicolumn{2}{c}{Average} \\
    \cmidrule{2-11}
    & T($m$)& R($^\circ$) & T($m$)& R($^\circ$) & T($m$)& R($^\circ$) & T($m$)& R($^\circ$)& T($m$)& R($^\circ$) \\
    \midrule
    Euler&1.16&2.85&2.54&2.95&1.25&6.48&1.98&6.97&1.73&4.81\\
    Axis-Angle&1.12&2.63&2.41&3.38&\textbf{0.84}&5.05&2.16&7.58&1.63&4.66\\
    Quaternion&\textbf{0.98}&2.50&2.39&3.44&1.06&6.01&2.59&8.81&1.76&5.19\\
    6D&1.10&2.56&2.21&3.43&1.01&5.43&\textbf{1.73}&5.82&1.51&4.31\\
    9D&1.14&3.03&2.11&3.50&0.88&6.39&1.95&5.95&1.52&4.72\\
    9D-Inf&\textbf{0.98}&2.32&\textbf{1.89}&3.32&1.15&6.36&1.96&6.25&\textbf{1.50}&4.56\\
    10D&1.54&2.62&2.32&3.39&1.20&5.76&1.85&6.69&1.73&4.62\\
    \midrule
    RPMG-Quat&1.04&1.91&2.42&2.72&0.98&4.28&1.82&4.89&1.57&3.45\\
    RPMG-6D&1.55&\textbf{1.70}&2.62&3.09&0.95&5.01&2.44&5.18&1.89&3.75\\
    RPMG-9D&1.57&1.82&4.37&3.12&0.93&4.17&1.92&\textbf{4.69}&2.20&3.45\\
    RPMG-10D & 1.30 & 1.74 & 3.21 & \textbf{2.59} & 1.10 & \textbf{3.47} &2.20&5.09 & 1.95 & \textbf{3.22} \\
\end{tabular}
    \label{tab:reloc}
\end{table*}

\subsection{Camera Relocalization}
\label{sec:reloc}
The task of camera relocalization is to estimate a 6 Degree-of-Freedom camera pose (rotation and translation) from visual observations, which is a fundamental component of many computer vision and robotic applications.
In this experiment, we use all the settings (data, network, training strategy, hyperparameters, etc.) of PoseLSTM \cite{PoseLSTM17} except that we modify the rotation representations and the gradient layers. We report the results on the outdoor Cambridge Landscape dataset \cite{posenet} in Table \ref{tab:reloc}.

Notice that our RPMG layer performs the best on the rotation regression task, but not on the translation regression. We believe this results from a loss imbalance. We does not change the weights of the rotation loss and translation loss, otherwise it leads to an unfair comparison with existing results. We only care about the rotation error here.

\section{More Implementation Details}
\label{sec:imp_detail}
\subsection{Experiment \ref{sec:pc_rotation} \& \ref{sec:self-supervised} \& \ref{sec:other manifolds}}
\textbf{Data}
We generate the data from ModelNet dataset \cite{wu2015modelnet} by sampling 1024 points on the mesh surface, following the same generation method as in \cite{zhou2019continuity}. We uniformly sample M rotations for each data point and set them as the ground truth. We apply the sampled rotations on the canonical point clouds to obtain the input data. 

\textbf{Network Architecture}
We use a PointNet++ MSG \cite{qi2017pointnetplusplus} backbone as our feature extractor. Our network takes input a point cloud with a resolution of 1024. It them performs three set abstractions to lower the resolution to 512, 128, and finally 1, resulting in a global feature of dimensionality 1024. The feature is finally pushed through a three-layer MLP $[1024, 512, N]$ to regress rotation, where $N$ is the dimension of the rotation representation. 

\textbf{Training details} The learning rate is set to 1e-3 and decayed by 0.7 every 3k iterations. The batch size is 20. For each experiment, we train the network on one NVIDIA TITAN Xp GPU for 30k iterations.

\subsection{Experiment \ref{sec:image}}
\label{sec:img_lr}
Most of the training settings and strategies are all the same as \cite{levinson2020analysis} except learning rate. We find setting initial learning rate $lr=1e-3$ and decaying to $1e-5$ can perform much better than using $lr=1e-5$ as in \cite{levinson2020analysis}, which accounts for the inconsistency of the results of those baseline methods compared to \cite{levinson2020analysis}. We believe that the methods should be compared under hyperparameters as optimal as possible. Thus, we stick to our $lr$ schedule. 

\begin{table}[htbp]
    \caption{\textbf{Test error percentiles for Experiment \ref{sec:pc_rotation} \& \ref{sec:image}} Left: test error percentiles of \textit{airplane} for Experiment \ref{sec:pc_rotation} after training completes. Right: test error percentiles of \textit{chair} for Experiment 5.2 after training completes.}

    \begin{minipage}[h]{0.48\columnwidth}
        \includegraphics[width=\columnwidth]{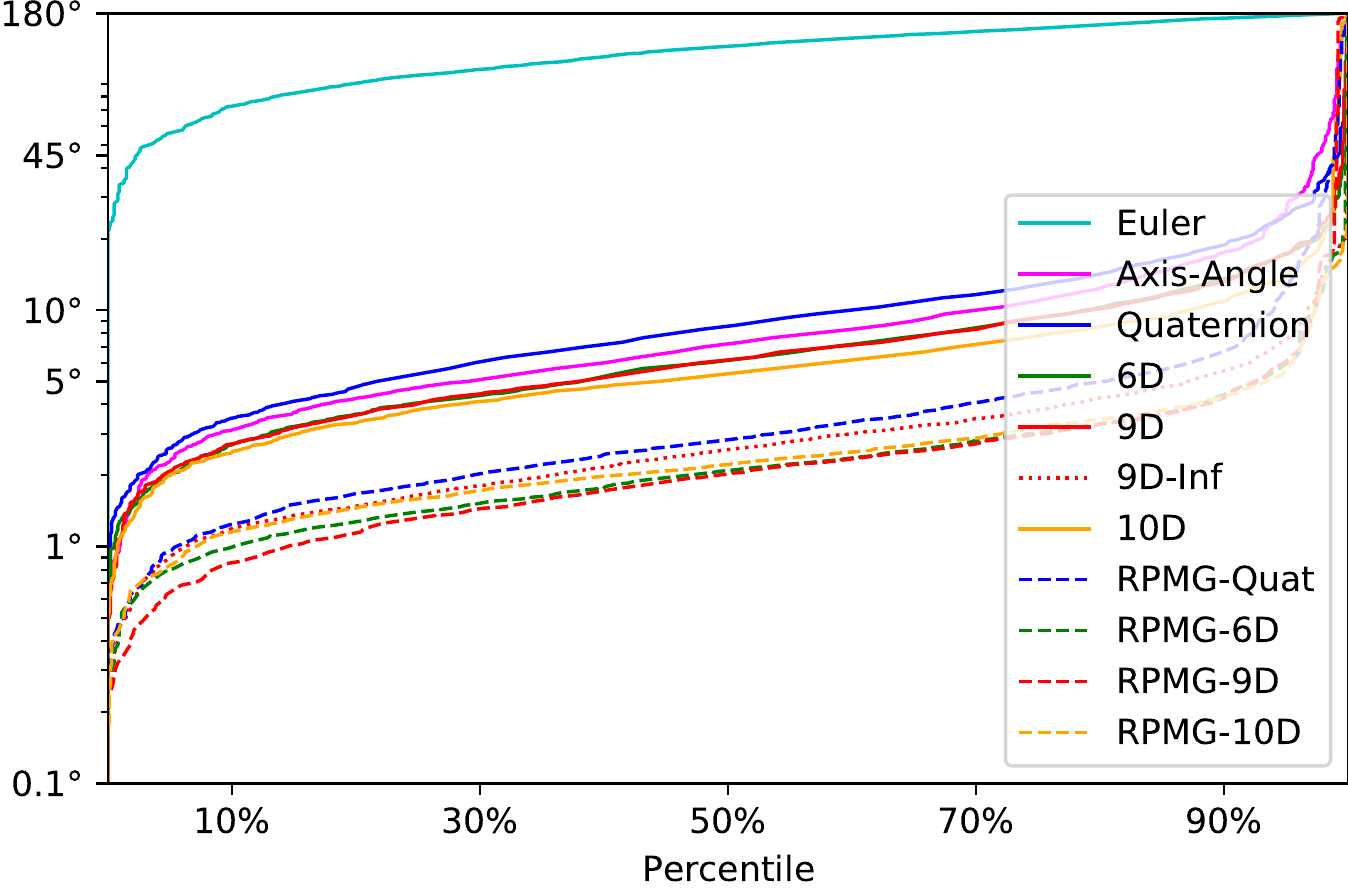}
    \end{minipage}
    \begin{minipage}[h]{0.48\columnwidth}
        \includegraphics[width=\columnwidth]{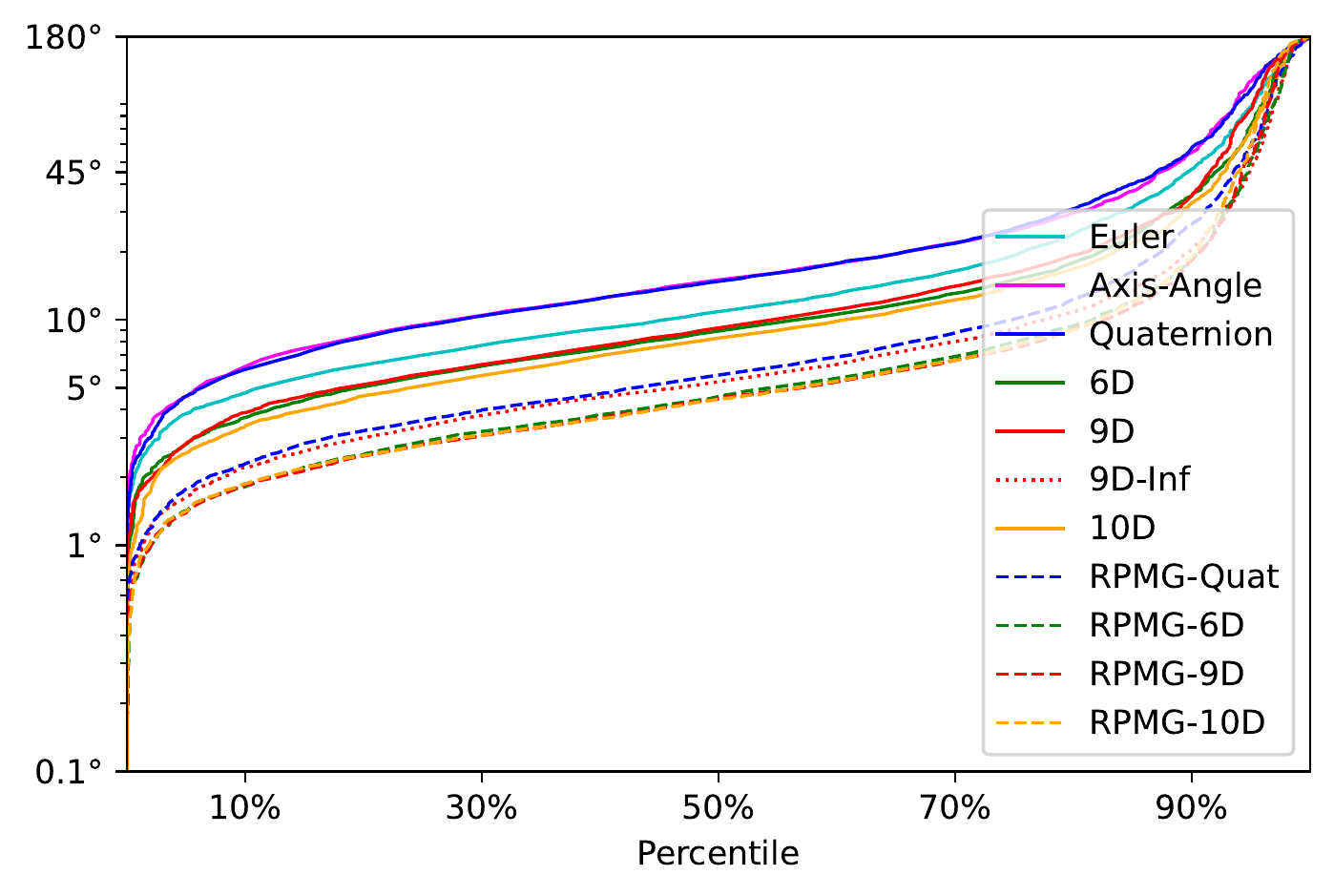}
    \end{minipage}
  \label{tab:add_fig}
   \vspace{-1mm}
\end{table}

\section{Addition on Rotation Representations}
\label{sec:q_r_trans}

\paragraph{Standard mapping between rotation matrix and unit quaternion}

The \emph{rotation mapping} $\phi: \q\mapsto \Rot$ algebraically manipulates a unit quaternion $\q$ into a rotation matrix: 
\scriptsize
\begin{equation}\phi(\q)=\begin{pmatrix}
        2(q_0^2+q_1^2)-1&2(q_1q_2-q_0q_3)&2(q_1q_3+q_0q_2)\\
        2(q_1q_2+q_0q_3)&2(q_0^2+q_2^2)-1&2(q_2q_3-q_0q_1)\\
        2(q_1q_3-q_0q_2)&2(q_2q_3+q_0q_1)&2(q_0^2+q_3^2)-1\\
\end{pmatrix}\end{equation}
\normalsize
where $\q=(q_0,q_1,q_2,q_3)\in\Sphere^3$. 

In the reverse direction, the \emph{representation mapping} $\psi(\Rot)$ can be expressed as:
\small 
\begin{equation}\left\{
\begin{array}{l}
      q_0=\sqrt{1+R_{00}+R_{11}+R_{22}}/2 \\
      q_1=(R{21}-R_{12})/(4*q_0)\\
      q_2=(R_{02}-R_{20})/(4*q_0)\\
      q_3=(R_{10}-R_{01})/(4*q_0)\\
\end{array}
\right.\end{equation}
\normalsize
Note that $\q=(q_0,q_1,q_2,q_3)$ and $-\q=(-q_0,-q_1,-q_2,-q_3)$ both are the valid quaternions parameterizing the same $\Rot$.

\end{document}